\documentclass{article}

\usepackage{microtype}
\usepackage{graphicx}
\usepackage{subcaption}
\usepackage{booktabs}
\usepackage{hyperref}

\usepackage[preprint]{icml2026}
\hypersetup{pdftitle={Similar Models, Broken Odds: Where Luce's Axiom Fails in Chatbot Arena},pdfauthor={Soovadeep Bakshi, Apurba Bose, Aishwarya Bose, Sandip Nandi},pdfsubject={}}

\usepackage{amsmath}
\usepackage{amssymb}
\usepackage{mathtools}
\usepackage{amsthm}
\usepackage[capitalize,noabbrev]{cleveref}

\theoremstyle{plain}
\newtheorem{proposition}{Proposition}
\theoremstyle{definition}

\newcommand{\logit}{\operatorname{logit}}
\newcommand{\x}{\ensuremath{\times}}

\icmltitlerunning{Similar Models, Broken Odds}

\begin{document}

\twocolumn[
  \icmltitle{Similar Models, Broken Odds:\\
    Where Luce's Axiom Fails in Chatbot Arena}

  \begin{icmlauthorlist}
    \icmlauthor{Soovadeep Bakshi}{ind}
    \icmlauthor{Apurba Bose}{ind}
    \icmlauthor{Aishwarya Bose}{ind}
    \icmlauthor{Sandip Nandi}{ind}
  \end{icmlauthorlist}
  \icmlaffiliation{ind}{Independent Researcher}
  \icmlcorrespondingauthor{Soovadeep Bakshi}{soovadeep.bakshi@utexas.edu}

  \icmlkeywords{Bradley-Terry, Luce's axiom, preference learning, LLM evaluation, Chatbot Arena}

  \vskip 0.3in
]

\printAffiliationsAndNotice{}

\begin{abstract}
Chatbot Arena ranks language models with the Bradley--Terry model, which gives
each model one score and so assumes Luce's choice axiom. Choice theory predicts
where this fails: near-copies are told apart more decisively than their scores
imply. On 1.8 million Arena votes, votes between versions of one model are 1.37
times as decisive as Bradley--Terry predicts, between sizes of one generation 1.26
times and between models of one lab 1.12 times; a release a year later gives
consistent multipliers, and the effect survives alternative tie treatments,
prompt subsets and topics, and timing checks. How alike two models' answers are worded and
formatted tracks the effect. A correction with four to six parameters predicts
held-out votes better than Bradley--Terry and than models with hundreds of
parameters, mostly by modelling ties; frozen and carried to new releases in a
test fixed in advance, its gains are concentrated in ties, and better
head-to-head odds are not established. The leaderboard order barely moves. With language-model judges and
three-answer menus, a near-copy takes votes mostly from its twin when choices are
pooled over prompts (log-odds shift $+0.10$, 95\% interval $0.05$ to $0.16$);
within a prompt the shift is smaller and clear for two of four judges.
\end{abstract}

\section{Introduction}
\label{sec:intro}

Public leaderboards turn pairwise human votes into one number per model.
Chatbot Arena \citep{chiang2024chatbot}, the most influential of them, fits the
Bradley--Terry model \citep{bradley1952rank}: the log-odds that model $i$ beats
model $j$ is the difference of their scores. The same model sits under most
reward models used to train chatbots from human feedback
\citep{christiano2017deep,ouyang2022training,rafailov2023direct}.

One score per model is a strong promise. If $A$ usually beats $B$ and $B$
usually beats $C$, the scores already fix how often $A$ beats $C$, without ever
watching $A$ against $C$. This product rule is the pairwise face of Luce's choice
axiom \citep{luce1959individual}. Choice theory has known for six decades where
the axiom breaks. In Debreu's example \citep{debreu1960review}, two recordings of
one piece of music, later retold as the red bus and the blue bus, two nearly
identical options are compared on the few details in which they differ, so the
better one wins more often than its overall appeal implies, while either one
against a different kind of option is a matter of taste. Near-copies are told
apart more decisively than their scores suggest, and no single score per option
can capture that \citep{tversky1972elimination,thurstone1927law}.

Language models come in families of near-copies: dated versions of one model,
sizes of one generation, successive generations from one lab. These are exactly
the comparisons a lab cares about when it ships a new version, and exactly where
classical theory says one score per model should fail. We test whether it does.

The question matters beyond leaderboards. Reward models for learning from human
feedback fit the same Bradley--Terry likelihood to preference pairs, and those
pairs mix two samples from one policy, the closest near-copies there are, with
answers from different models. Arena scores a model across prompts, whereas a
reward model scores an answer given its prompt, $r(x,y)$, so our evidence does
not show that a single temperature is misspecified there; we state it as a
hypothesis. Per-batch temperatures \citep{wu2024betadpo} and preference models
without one score per answer \citep{munos2024nash,swamy2024minimaximalist,sun2025rethinking}
relax this in different ways. And a lab deciding whether a new version beats the old one reads
exactly the sibling odds that one score per model gets wrong.

\paragraph{Contributions.}
\begin{itemize}
\item \textbf{A test tied to outside information.} Pairwise votes alone cannot
reveal which pairs are special \citep{cherapanamjeri2026learning,xing2026attention}.
We tie each pair's decisiveness to information fixed before any vote is seen,
model lineage and answer text, which turns a known identification limit into a
test with one to three parameters (\cref{sec:setup}).
\item \textbf{Siblings are judged more decisively.} On two Arena releases, related
models are separated more sharply than Bradley--Terry implies (on the large log in
proportion to how closely they are related; on the newer, smaller release more
weakly, but consistently), relative to unrelated pairs.
The effect survives alternative tie treatments, measured prompt-mix
adjustments and timing controls (\cref{sec:results}).
\item \textbf{Surface similarity tracks the effect.} How alike two answers are
worded and formatted predicts decisiveness and, in the newer release, leaves the
family labels little to add; similarity of meaning predicts less (\cref{sec:mechanism}).
\item \textbf{A small correction.} A correction with four to six
parameters (in its lineage form, a nested logit on two-answer votes when the
multipliers are ordered, \cref{sec:setup}) beats plain Bradley--Terry and
models with hundreds of parameters on votes and matchups it never saw. Most of the
gain is in ties; the decisiveness term alone gives small win-odds gains between
relatives on random votes and unseen pairs, not on later votes, and improves
their calibration. Frozen and carried to new releases, its gains are
concentrated in ties, and gains in head-to-head odds are not statistically
established; the leaderboard order stays intact (\cref{sec:fix}).
\item \textbf{A menu test.} With language-model judges and three-answer menus, a
near-copy takes its votes mostly from its twin when choices are pooled over
prompts. Within a prompt the shift is smaller and clear for two of four judges;
siblings' strengths move together across prompts, consistent with prompt
heterogeneity contributing to the pooled effect (\cref{sec:judges}).
\end{itemize}

\section{Related Work}
\label{sec:related}

\paragraph{Arena and its scoring.} Chatbot Arena \citep{chiang2024chatbot}
introduced crowd-sourced pairwise evaluation with a Bradley--Terry leaderboard and
showed that its votes agree with expert judgement; it assumes one score per model
and does not test that assumption. \citet{singh2025leaderboard} document uneven
sampling and private testing on Arena; our timing checks find no sign that
launch-period sampling drives the effect. \citet{haghtalab2026pluralistic} show that
Bradley--Terry aggregation breaks a stability property because voters differ.
Voter heterogeneity over attributes that siblings share, as in mixed logit
\citep{mcfadden2000mixed} or hidden context in preference learning
\citep{siththaranjan2024distributional}, is one route to our effect, which Arena
data cannot separate from similarity within a single vote. Elo ratings are least
reliable between models of similar strength \citep{boubdir2024elo}, and Arena's
style control adds length and formatting covariates to each comparison
\citep{li2024stylecontrol}; both shift scores, while our effect changes the scale of
a comparison.

\paragraph{Beyond one score per model.} \citet{ameli2025statistical} fit a
factored covariance between models on an earlier snapshot of the same Arena log.
A pair with small spread in that model is exactly a decisive pair, so the
covariance can express our effect, but it is never linked to model families or to
answer similarity; their held-out results are mixed across metrics. We reproduce
their fit with their code and compare held-out prediction with our own
implementation.
\citet{xing2026attention} use the same per-pair scale, motivated by limited
attention, and find that Arena votes violate the one-score rule overall.
Attention predicts that pairs which are hard to tell apart look \emph{less}
decisive; near-copies in fact look \emph{more} decisive.
\citet{cherapanamjeri2026learning} prove that pairwise data cannot identify the
full correlation structure between options; our test uses only the spread of
each pair's comparison, which becomes identifiable once it is tied to covariates
fixed in advance. General preference and blade-chest models
\citep{zhang2024beyond,chen2016modeling} capture intransitive win rates with
per-model embeddings; on these data they gain almost nothing out of sample.

\paragraph{Choice theory.} Luce's axiom \citep{luce1959individual}, Debreu's
critique \citep{debreu1960review}, Thurstone's comparative judgement
\citep{thurstone1927law}, elimination by aspects \citep{tversky1972elimination},
the attraction effect \citep{huber1982adding} and the nested logit and its
generalisations \citep{mcfadden1974conditional,mcfadden1978modeling,train2009discrete}
are classical. That similar options are told apart more sharply in binary choice
is itself an old finding \citep{becker1963probabilities,tversky1969substitutability,rumelhart1971similarity,mellers1994similarity},
and decision field theory explains it with correlated noise
\citep{busemeyer1993decision,roe2001multialternative}. In machine learning,
context-dependent choice models learn departures from independence of irrelevant
alternatives from data \citep{benson2016relevance,ragain2016pairwise,seshadri2019discovering,tomlinson2021learning};
the salient-features model of \citet{bower2020preference}, in which a pair is
compared on the features where its items differ, is the closest in spirit; and
testing the axiom in general needs very large samples
\citep{seshadri2019fundamental}. Our contribution is to test the binary-choice
prediction on a large modern leaderboard, tie it to measured answer similarity,
and quantify the consequence for scoring.

\paragraph{Language models as judges.} Strong models agree with human preference
judgements at rates close to inter-human agreement \citep{zheng2023judging} but
favour particular positions and lengths \citep{wang2023fair,dubois2024length}.
We use judges not as a substitute for human votes but as choosers whose menu we
control, and we balance answer positions by design.

\section{Setup: Decisiveness and the Product Rule}
\label{sec:setup}

For models $i$ and $j$ with scores $s_i, s_j$, Bradley--Terry sets
\begin{equation}
\logit P(i \succ j) = s_i - s_j,
\end{equation}
so log-odds add along any chain:
$\logit P(i\succ \ell) = \logit P(i\succ j) + \logit P(j\succ \ell)$. This product rule
is what one score per model promises. Luce's axiom says more: the odds of $i$ over
$j$ do not depend on what else is on the menu. Arena shows two answers at a time,
so its data test the product rule; \cref{sec:judges} tests the axiom itself.

\paragraph{Decisiveness.} By \emph{decisiveness} we mean how steeply the win odds
of a pair rise with its score gap. We add one multiplier per kind of pair,
\begin{equation}
\label{eq:k}
\logit P(i\succ j) = k_{ij}\,(s_i - s_j),
\end{equation}
with $\log k_{ij} = \gamma_{\mathrm{rel}(i,j)}$ or $\log k_{ij} = \beta^{\top} x_{ij}$,
where $\mathrm{rel}(i,j)$ is the pair's relation (versions of one model line,
sizes within one generation, models from one lab, or unrelated, the
reference with $k=1$) and $x_{ij}$ holds measured answer similarity. $k_{ij}>1$
means the pair's votes are more lopsided than its score gap implies.
$k$ multiplies log-odds, not probabilities: at $k=1.37$, a 55\% prediction becomes
57\% and a 70\% one 76\%. We fit scores and $k$ jointly by maximum likelihood.

\begin{proposition}[Thurstone reading]
\label{prop:thurstone}
Let utilities be $U_i = s_i + \varepsilon_i$ with jointly normal noise, common
variance $\sigma^2$ and correlation $\rho_{ij}$. Then
$P(i\succ j) = \Phi\big((s_i-s_j)/(\sigma\sqrt{2(1-\rho_{ij})})\big)$, so relative to
pairs with correlation $\rho_0$ the decisiveness is
$k_{ij} = \sqrt{(1-\rho_0)/(1-\rho_{ij})}$. Unless $\rho$ is equal within a
triple $i,j,\ell$, the probit gaps $\Phi^{-1}P$ add along it only on a
measure-zero set of scores, so no single score per model fits all three pairs.
\end{proposition}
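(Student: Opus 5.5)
The plan has three parts: compute the pairwise probability, read off the decisiveness ratio, and then argue that no single score vector fits a triple with unequal correlations.

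\emph{Pairwise probability.} For $i\succ j$, i.e.\ $U_i>U_j$, write $D=\varepsilon_j-\varepsilon_i$. Joint normality makes $D$ normal with mean zero and variance
\[
\sigma^2+\sigma^2-2\rho_{ij}\sigma^2 = 2\sigma^2(1-\rho_{ij}).
\]
Then
\[
P(i\succ j)=P\bigl(D<s_i-s_j\bigr)=\Phi\bigl((s_i-s_j)/(\sigma\sqrt{2(1-\rho_{ij})})\bigr),
\]
which is the stated formula.

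\emph{Decisiveness ratio.} In probit form, the slope of $\Phi^{-1}P(i\succ j)$ in the gap $s_i-s_j$ is $1/(\sigma\sqrt{2(1-\rho_{ij})})$. Dividing by the slope at reference correlation $\rho_0$ cancels $\sigma\sqrt2$ and gives
\[
k_{ij}=\sqrt{(1-\rho_0)/(1-\rho_{ij})}.
\]
This is the probit analogue of \eqref{eq:k}, with $k$ multiplying the probit gap rather than the logit. I will state this caveat explicitly.

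\emph{Non-additivity.} Set $c_{ab}=1/\sqrt{1-\rho_{ab}}$ and $g_{ab}=\Phi^{-1}P(a\succ b)=c_{ab}(s_a-s_b)/(\sigma\sqrt2)$. Additivity along the triple means $g_{i\ell}=g_{ij}+g_{j\ell}$. Let $x=s_i-s_j$ and $y=s_j-s_\ell$, so $s_i-s_\ell=x+y$. The condition becomes
\[
(c_{i\ell}-c_{ij})\,x+(c_{i\ell}-c_{j\ell})\,y=0.
\]
If the $\rho$'s are not all equal, the $c$'s are not all equal, so the two coefficients cannot both vanish. (Both vanishing forces $c_{ij}=c_{j\ell}=c_{i\ell}$.) The condition therefore defines a line through the origin in the $(x,y)$-plane. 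Its preimage in score space is a hyperplane, which has Lebesgue measure zero.

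\emph{Final claim.} Suppose the observed probabilities came from such a Thurstone model. A single-score model $g_{ab}=t_a-t_b$ would force $g_{i\ell}=g_{ij}+g_{j\ell}$. Off the hyperplane the true probit gaps violate this identity, so no $t$ reproduces all three pairs.

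\emph{Main obstacle.} The computation is routine; the delicate part is the interpretive step. The proposition phrases $k$ in probit units while \eqref{eq:k} is logistic. I will note that the claim concerns the probit gaps, and that the logit analogue holds only approximately, via $\logit\approx1.7\,\Phi^{-1}$. I will also flag the degenerate case $\rho_{ij}=1$, which is excluded so that the variance is positive.
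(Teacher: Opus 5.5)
Your proof is correct and follows the paper's own argument. The difference $U_i-U_j$ is normal with variance $2\sigma^2(1-\rho_{ij})$, the ratio of standard deviations gives $k_{ij}$, and additivity along $i,j,\ell$ is a linear equation in the scores that holds identically only when $\rho_{ij}=\rho_{j\ell}=\rho_{i\ell}$; you simply make the coefficients $(c_{i\ell}-c_{ij})$, $(c_{i\ell}-c_{j\ell})$ and the resulting measure-zero hyperplane explicit, which the paper leaves implicit.
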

\begin{proof}
$U_i - U_j \sim N\big(s_i - s_j,\, 2\sigma^2(1-\rho_{ij})\big)$; take the ratio of
the standard deviations. Along a chain $i,j,\ell$, additivity of the scaled gaps is
a linear equation in $s$, which holds identically only if
$\sigma_{ij}=\sigma_{j\ell}=\sigma_{i\ell}$.
\end{proof}

The proposition concerns probit gaps; the logistic product rule is its analogue,
so correlations read off logistic multipliers below are approximate. We expect near-copies to share most of their idiosyncrasies, so
$\rho_{ij}$ should be high and the comparison rest on the few details that differ.
The ratio $(1-\rho_{ij})/(1-\rho_0)=1/k_{ij}^2$ needs no further assumption. If
unrelated pairs are uncorrelated ($\rho_0=0$), the multipliers we estimate below
(1.37, 1.26, 1.12) correspond to noise correlations of 0.47, 0.37 and 0.20. Models
that share training data and tuning pipelines are unlikely to be uncorrelated, so
these are lower bounds: with $\rho_0=0.3$ they become 0.63, 0.56 and 0.44.

\paragraph{Ties.} Arena voters may call a tie. Davidson's model
\citep{davidson1970extending} keeps them: with $d = k_{ij}(s_i-s_j)$,
$P(i \text{ wins}) \propto e^{d/2}$, $P(j \text{ wins}) \propto e^{-d/2}$ and
$P(\text{tie}) \propto \nu_{ij}$, where the tie propensity $\nu_{ij}$ also varies by
relation or similarity and with each model's own tie tendency. ``Relatives tie
more'' and ``relatives are called more decisively'' are then separate parameters;
a Thurstone model with a tie threshold \citep{glenn1960ties} would tie both to one
correlation, which we leave for later work.

\paragraph{Nested logit.} The classical model for this pattern is the nested
logit \citep{train2009discrete}, which groups options in nests with dissimilarity
$\lambda\in(0,1]$. When the menu holds only two options from one nest, the upper
choice is trivial and $P(i\succ j)=\big(1+e^{-(s_i-s_j)/\lambda}\big)^{-1}$: exactly
\cref{eq:k} with $k_{ij}=1/\lambda$ of the pair's smallest shared nest. Our lineage
model is therefore the nested logit with nests lab $\supset$ generation
$\supset$ model line, fitted on two-answer menus. Its consistency condition
$1\ge\lambda_{\mathrm{lab}}\ge\lambda_{\mathrm{gen}}\ge\lambda_{\mathrm{line}}$
becomes $1\le k_{\mathrm{lab}}\le k_{\mathrm{size}}\le k_{\mathrm{version}}$. A scale
driven by measured similarity is a nested logit only if similarity respects such
a nesting; otherwise it is a reduced-form pairwise model. What
two-answer votes cannot test is the nested logit's prediction for larger menus;
\cref{sec:judges} tries.

\paragraph{What pairwise data can identify.} A free $k_{ij}$ for every pair
cannot be separated from the scores without one parameter per pair, and the full
correlation structure cannot be learned from pairwise votes at all
\citep{xing2026attention,cherapanamjeri2026learning}. Tying $k_{ij}$ to lineage and
answer text, fixed before any vote is seen, gives a hypothesis with one to three
parameters, with unrelated pairs as the reference and unrelated pairs launched
together as a placebo.

\paragraph{Model-free check.} For each pair with at least three shared opponents
we compare the direct head-to-head log-odds $\ell_{ij}$ with the log-odds implied
through the shared opponents $c$, the inverse-variance weighted average of
$\ell_{ic}-\ell_{jc}$; the two use disjoint votes. Under
Bradley--Terry the slope of direct on indirect is 1; a slope above 1 within a
relation means that relation is judged more decisively than the chain predicts.

\section{Data and Similarity Measures}
\label{sec:data}

All data are public; no new votes or participants are involved.
\begin{itemize}
\item \textbf{Arena battle log} (\texttt{clean\_battle\_}\allowbreak\texttt{20240814\_public}):
1,799,991 anonymous votes on 129 models, April 2023 to August 2024, ties marked.
The main data set.
\item \textbf{Arena human-preference 140k} \citep{lmarena2025hp140k}: 135,634
votes on 53 newer models (April to July 2025) with both answers. The replication
and the main similarity data.
\item \textbf{Arena human-preference 100k} \citep{lmarena2024hp100k}: 106,134
votes with answers from June to August 2024, overlapping the end of the log.
Similarity on the older models.
\end{itemize}

\paragraph{Lineage.} Each model's lab, generation and line come from public names
and model cards. Every pair is tagged \emph{version} (one model at different dates
or modes, e.g.\ \texttt{gpt-4-0314}/\texttt{gpt-4-0613}), \emph{size} (one lab and
generation, different size or tier, e.g.\ Llama-3 8B/70B), \emph{lab} (one lab,
different generation) or \emph{unrelated}, before any vote is used. The 2024 test
has 17 version, 49 size and 119 same-lab pairs (\cref{app:pairs}). Thirteen
models' tags are judgement calls (app endpoints and moving aliases,
search-augmented variants, unreleased code names, tiers whose place in the family
is not public, code-specialised offshoots); \cref{sec:robust} drops them.

\paragraph{Answer similarity.} From the first assistant turn of both answers in
every battle we measure: \emph{wording}, the TF-IDF cosine; \emph{meaning}, the
cosine of sentence-encoder embeddings \citep{reimers2019sentence,wang2020minilm}
of the first 128 tokens and, for a sample of battles, of the whole answer;
\emph{style}, minus the distance between the two answers' formatting profiles
(counts of headings, bullet and numbered lines, bold spans, code blocks, table
rows, paragraphs and emoji, plus length, all log-scaled and standardised); and the
\emph{length gap}. Per pair we average over battles and standardise across
pairs. \emph{Excess} versions subtract an additive model with one effect per
model, removing each model's general typicalness and keeping only how alike this
pair is. We treat wording as the primary measure and the others as exploratory.

\section{Siblings Are Judged More Decisively}
\label{sec:results}

\begin{figure}[t]
\centering
\includegraphics[width=\columnwidth]{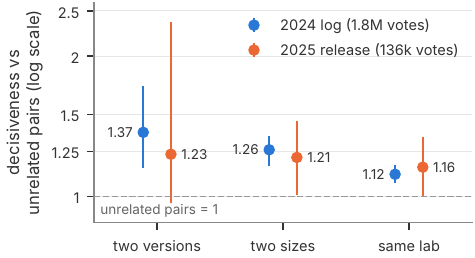}
\caption{Decisiveness of votes between related models relative to unrelated
pairs (\cref{eq:k}, ties dropped, pairs with at least 50 non-tied votes), with 95\% intervals from resampling pairs.
Both releases point the same way; the newer one has a thirteenth of the votes.}
\label{fig:multipliers}
\end{figure}

On the battle log, with ties dropped and pairs with at least 50 non-tied votes
(2,650 pairs, 1.14 million votes), the multipliers relative to unrelated pairs are
\x1.37 for version pairs (95\% interval 1.20 to 1.76 resampling whole model
families), \x1.26 for size pairs (1.14 to 1.34) and \x1.12 for same-lab pairs
(1.01 to 1.19); resampling pairs or voters gives narrower intervals, and dropping
any one family leaves 1.34 to 1.43, 1.22 to 1.29 and 1.04 to 1.15; the likelihood-ratio
statistic against Bradley--Terry is 177.6 on 3 degrees of freedom
(\cref{fig:multipliers}). The ordering follows relatedness, as expected if noise
correlation rises with relatedness (\cref{prop:thurstone}), and it satisfies the
nested-logit ordering. The 140k release, with different models a year later and a
thirteenth of the votes, points the same way. On its 981 pairs with at least 30
non-tied votes the multipliers are \x1.24, \x1.15 and \x1.17, jointly different
from Bradley--Terry (likelihood ratio 13.3 on 3 degrees of freedom, $p=0.004$);
with the 50-vote cut of \cref{fig:multipliers} they are \x1.23, \x1.21 and \x1.16,
and each interval alone reaches or nearly reaches 1 (version 0.97 to 2.37, size 1.01 to 1.45, lab
1.00 to 1.34). This is not a failed replication: the two releases are consistent
(heterogeneity $Q\le0.21$ on 1 degree of freedom for every relation), though the
newer one is too small to show they are equal, and pooled
they give \x1.35 (1.12 to 1.62), \x1.25 (1.17 to 1.34) and \x1.12 (1.07 to 1.17).
Selecting pairs by vote count leaves the likelihood valid, because Arena's
adaptive sampling depends only on votes already observed.

Votes are overdispersed relative to the binomial (Pearson $\chi^2$ per degree of
freedom 1.71 in 2024, 1.13 in 2025); dividing the likelihood ratios by it leaves
104 ($p<10^{-21}$) and 11.8 ($p=0.008$).

The model-free check agrees (\cref{fig:direct_indirect}). The slope of direct on
indirect log-odds is 0.99 for unrelated pairs, the reference, and 1.28, 1.23 and
1.10 for version, size and same-lab pairs. Relatives sit at smaller score gaps,
where the unrelated slope is slightly above 1 (1.08 in the closest fifth);
reweighted to each relation's gap distribution, unrelated pairs give 0.98 to 1.00,
so the excess slopes are $+0.28$ (95\% interval $+0.13$ to $+0.53$), $+0.24$
($+0.14$ to $+0.33$) and $+0.11$ ($+0.07$ to $+0.15$). One concrete case: plain scores give
\texttt{yi-large-preview} a 55\% chance against \texttt{yi-large}, but in 229
decisive head-to-head votes it wins 65\%.

Related models also tie more often: 44\% of votes for version pairs and 43\% for
size pairs, against 35\% for unrelated pairs, and the gap holds within three of the four
bands of score closeness that contain version pairs and within every band for
size pairs. Near-copies are either called a tie or called decisively.

\subsection{Robustness}
\label{sec:robust}

\begin{table}[t]
\caption{Decisiveness multipliers (version / size / same lab) on the 2024 log
under alternative choices. Sources: \texttt{ties}, \texttt{subsets}, \texttt{timing}, \texttt{clean}, \texttt{mixweight} and \texttt{gonogo.txt} in \texttt{results/2024/}.}
\label{tab:robust}
\centering\small
\setlength{\tabcolsep}{4pt}
\begin{tabular}{@{}lccc@{}}
\toprule
Check & version & size & lab \\
\midrule
Main (ties dropped, $\ge$50 votes) & 1.37 & 1.26 & 1.12 \\
All ties kept (Davidson) & 1.21 & 1.23 & 1.13 \\
Ties as half wins (Arena) & 1.19 & 1.10 & 1.06 \\
$\ge$200 votes per pair & 1.44 & 1.26 & 1.13 \\
English only & 1.33 & 1.17 & 1.11 \\
De-duplicated prompts & 1.37 & 1.25 & 1.10 \\
Code prompts & 1.32 & 1.12 & 1.08 \\
Non-code prompts & 1.35 & 1.29 & 1.12 \\
Weekly scores & 1.36 & 1.23 & 1.13 \\
Launch weeks dropped & 1.64 & 1.23 & 1.23 \\
Judgement-call pairs dropped & 1.44 & 1.27 & 1.12 \\
Prompt mix reweighted & 1.37 & 1.25 & 1.13 \\
\bottomrule
\end{tabular}
\end{table}

\paragraph{Ties.} With Davidson's model the multipliers are \x1.21, \x1.23 and
\x1.13 counting every tie and \x1.43, \x1.26 and \x1.14 counting plain ties only
(\cref{tab:robust}). Relatives are both more likely to tie (\x1.40, \x1.34 and
\x1.15 tie propensity) and more decisive when they do not. The version effect
depends on the tie rule; the size and lab effects are stable.

\paragraph{Prompts, labels and timing.} Prompt subsets, reweighting each pair's
votes to the prompt mix of unrelated pairs, vote thresholds, dropping
judgement-call lineage tags and fitting each calendar quarter all keep the
multipliers above 1 (\cref{tab:robust}, \cref{app:robust}). Nor does launch-time
sampling explain them: with a separate score per model per week they are \x1.36,
\x1.23 and \x1.13, dropping each model's first four weeks makes the version
effect larger (\x1.64), and unrelated pairs launched within 14 days of each other
get \x0.99.

\paragraph{Prompt topics.} With prompts clustered into 20 topics, related models'
residual win rates move together across topics (excess correlation over
unrelated pairs $+0.34$, $+0.12$, $+0.15$ in the 140k release), yet scores that
vary by topic do not remove the multipliers; they rise (\x1.46, \x1.20, \x1.24;
100k release similar), so the effect persists after controlling for these 20
topics, though coarse topics cannot remove all prompt heterogeneity
(\cref{app:topics}).

\subsection{Answer Similarity Tracks the Effect}
\label{sec:mechanism}

Relatives write alike: version pairs sit 2.4 to 3.0 standard deviations above the
average pair in wording similarity, size pairs 1.5 to 1.7 and same-lab pairs 0.8
to 0.9, and pairs with more similar wording tie more (correlation $+0.36$ to
$+0.43$). \Cref{tab:similarity} replaces the lineage labels in \cref{eq:k} by
similarity.

\begin{table}[t]
\caption{Decisiveness predicted by answer similarity: likelihood-ratio gain over
Bradley--Terry and multiplier per standard deviation. ``Adds'' is the gain from
adding the measure to the model in parentheses. Sources:
\texttt{results/*/similarity.txt}; 2024 meaning from \texttt{results/2024/meaning/}.}
\label{tab:similarity}
\centering\small
\begin{tabular}{@{}lcc@{}}
\toprule
 & 100k (2024) & 140k (2025) \\
 & 481 pairs & 981 pairs \\
\midrule
Lineage labels (3 df) & 15.5 & 13.3 \\
Wording only & 9.6, \x1.08 & 66.9, \x1.12 \\
Style only & 14.1, \x1.07 & 38.1, \x1.09 \\
Meaning, first 128 tokens & 2.2, \x1.03 & 26.8, \x1.07 \\
Meaning, whole answer & -- & 20.3, \x1.06 \\
Lineage adds (to wording) & 7.9 & 0.8 \\
Style adds (to wording + gap) & 7.1 & 1.1 \\
\bottomrule
\end{tabular}
\end{table}

In the 140k release, wording similarity alone predicts decisiveness (\x1.12 per
standard deviation, likelihood ratio 66.9 on one degree of freedom) and leaves
the family labels little to add (0.8), though that is not proof of full mediation.
Meaning similarity matters less and leaves more to the labels, and reading the whole answer is no better than reading its
opening; once wording is in, neither style nor whole-answer meaning adds anything.
Answers that differ a lot in length are judged less decisively (\x0.92 per
standard deviation alone). The wording effect survives controls for length gap and
score closeness (\x1.08) and keeping ties (\x1.09).

In the 100k release, style is the best single measure, ahead of wording, and adds
to it; list structure carries most of it (bullets add a likelihood ratio of 6.7
to wording, numbered lists 3.6, code blocks 3.8). The family labels still add
information on top of every text measure (likelihood ratio 6.0 on top of style).
The two releases thus disagree on whether surface similarity accounts for the
whole family effect: nearly in 2025, not in 2024. Sample size does not explain
this: cut to the 2024 fit's 50,000 non-tied votes, with similarity recomputed from
each subsample, wording still leaves the 2025 labels little to add (lineage adds a
median likelihood ratio of 1.2, and more than 7.9 in none of 50 draws), so the
difference lies between the releases, not their sizes. What
both agree on is the ranking of measures: surface similarity predicts decisiveness
better than similarity of meaning (in 2024 a likelihood ratio of 2.2 for meaning
of the opening against 9.6 for wording and 14.1 for style). These are associations. Shared lineage causes
similar answers, so similarity is better read as a mediator that our measures
capture in part than as an independent cause; pairs that share a prompt mix (for
example code) are also more similar for that reason, and we have not separated
the two. Among the measures, wording is primary; style, meaning and the
per-feature results are exploratory.

\section{Similarity-Aware Corrections and Their Limits}
\label{sec:fix}

All models keep ties with per-model tie tendencies; the fixes change both a
pair's win odds and its tie propensity: \emph{plain Bradley--Terry}; the
\emph{lineage fix}, \cref{eq:k} with one decisiveness and one tie multiplier per
relation (6 extra parameters); the \emph{similarity fix}, with both depending on
wording similarity, length gap and style (6); the factored-covariance model of
\citet{ameli2025statistical} (rank 3, 4 per model; it matches their
\texttt{leaderbot} fit); a rank-2 \emph{general preference} model
\citep{zhang2024beyond,chen2016modeling} (4 per model); and a ridged
\emph{per-pair scale} after \citet{xing2026attention}.

There are three held-out tests: a random fifth of the votes; a fifth of the model
pairs held out entirely, the test of \citet{ameli2025statistical} (half of each
held-out pair's battles supply answer text only); and later votes, training
before the 60th to 90th percentile of time and testing on the next tenth (5 seeds
or 4 cutoffs). Answer similarity never comes from a test battle. The rivals'
ridge is chosen on validation data; our fixes have no tuning. Gains are
differences in mean log loss; plain Bradley--Terry loses about 1,050 millinats
per vote, so a gain of 1 is about 0.1\%.

\begin{figure*}[t]
\centering
\includegraphics[width=\textwidth]{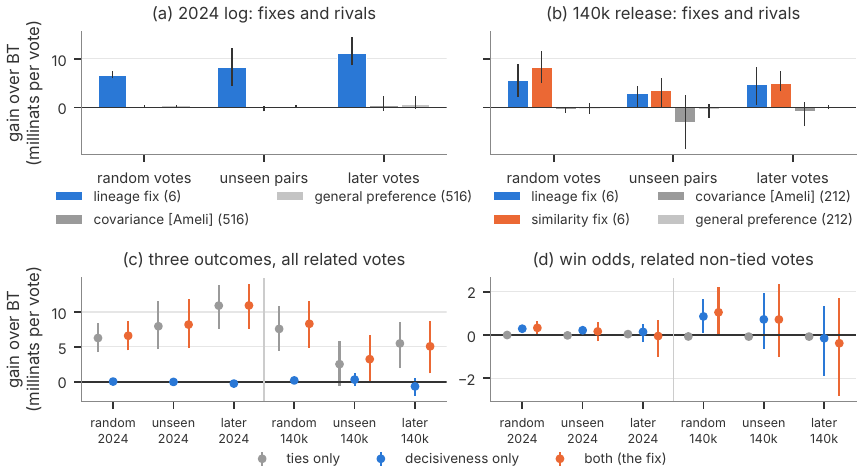}
\caption{Held-out gain over plain Bradley--Terry on votes between related models
(millinats per vote). (a, b) Three-outcome loss of the fixes and rivals (mean and
range over seeds or cutoffs; parameter counts in parentheses). (c, d) Each part of
the fix alone (2024: lineage; 140k: similarity), with 95\% intervals resampling
model pairs. All votes: \cref{tab:heldout}; all cells: \cref{tab:ablation}.}
\label{fig:heldout}
\end{figure*}

The small fixes beat every rival without outside information in every test of
both releases (\cref{fig:heldout}a--b, \cref{tab:heldout}). Every overall gain
of the fixes is above zero, and so is every gain between relatives except the
lineage fix's on unseen 2025 pairs ($+2.8$, $-0.7$ to $+6.2$). The flexible
models, which must learn from votes alone which pairs are special, gain at most
0.2 millinats per vote overall and lose on unseen pairs. Adding the six lineage
parameters to the covariance model matches the lineage fix on the 2024 log, so
the outside information carries the improvement. In the 140k release measured
similarity beats the lineage labels on every split; the specifications were
chosen on the same data, so this is not a confirmatory test of the measure.

\paragraph{Ties or win odds?} Each fix changes two things: the decisiveness of
win odds and how often relatives tie. Switched on alone
(\cref{fig:heldout}c--d, \cref{tab:ablation}), the tie terms carry most of the
three-outcome gain ($+0.91$ of $+0.96$ overall on random 2024 votes). The
decisiveness term alone improves win odds between relatives on random and unseen
2024 votes ($+0.29$ and $+0.22$ millinats, intervals above zero), not detectably
on later votes, and the calibration slope on relatives' non-tied votes from
$1.09$ ($1.06$ to $1.14$, too close to even) to $1.04$ ($1.00$ to $1.08$).

\begin{figure}[t]
\centering
\includegraphics[width=\columnwidth]{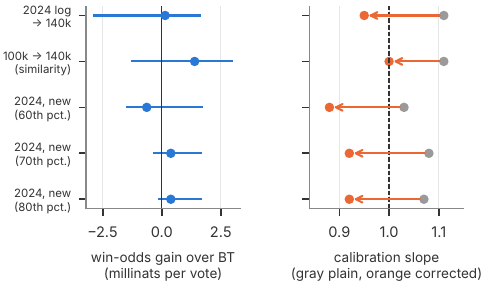}
\caption{Frozen transfer to target votes between related models. Left: win-odds
gain of the frozen correction over plain scores (95\% intervals resampling labs).
Right: calibration slope with plain scores (gray) and with the frozen correction
(orange); 1 is calibrated. Details: \cref{tab:transfer}.}
\label{fig:transfer}
\end{figure}

\paragraph{Frozen transfer.} We fixed a transfer test before running it
(\texttt{transfer.py}, committed first): learn the correction on earlier data,
freeze it, fit target scores on votes between unrelated models only, and predict
every related pair's votes. The targets were analysed above, so the test is
retrospective. The primary outcome, the lineage fix's win-odds gain from the 2024
log to the 140k release, is $+0.14$ millinats ($-2.92$ to $+1.68$, resampling
labs); its three-outcome gain, $+9.7$ ($-0.5$ to $+19.3$), also includes zero, so
the test is inconclusive (\cref{fig:transfer}). Tie terms alone have three-outcome
intervals above zero in all four lineage settings. Plain scores have slopes of 1.03 to 1.11;
the frozen lineage multipliers' slopes fall below 1 (0.88 to 0.95), and the similarity fix
reaches 1.00, its win-odds interval including zero.

\paragraph{Why might the frozen multipliers overcorrect?} Post hoc diagnostics
(\cref{app:why}) argue against score-estimation noise and suggest contributions
from joint estimation and changes between releases. Sibling gaps from unrelated
votes have reliability 0.99 and 1.00. Given those scores, the best multiplier on
sibling votes is 1.10 for new 2024 releases (1.02 to 1.24 resampling labs) and
1.11 in the 140k release, whose lab interval (0.89 to 1.28) includes 1. In-sample
multipliers run a few percent above it even on the target itself, and the 2024
multiplier fell between periods (1.17 to 1.12); in a paired bootstrap neither
part, nor the overshoot itself or a flatter relation gradient in the 140k
release, has intervals excluding zero under both lab and pair resampling. The
attainable gain is small: shared or relation-specific multipliers fitted to the
target's own sibling votes, on fixed target scores, gain only 0.40 and 0.61
millinats, too little for the transfer test's lab intervals to resolve.

\subsection{What Changes on the Leaderboard}

On the battle log the lineage fix leaves the order almost unchanged: the rank
correlation with plain scores is 0.9999, the mean model moves 2.2 Elo points and
the largest move is 8; Arena-style ranks move by at most 3 places, and no sibling
pair swaps order, because about 89\% of votes are between unrelated models. In the 140k
release the similarity fix moves models by up to 21 Elo points, most for models
whose answers are unlike everyone else's.

\section{A Menu Test with Language-Model Judges}
\label{sec:judges}

Arena only ever shows two answers, so its votes cannot show whether adding a
near-copy of $A$ to a choice between $A$ and $B$ takes votes mostly from $A$, the
red-bus prediction of the nested logit. We control the menu with language-model
judges.

\paragraph{Design.} Eight inexpensive models answered 200 English Arena prompts
drawn at random: Llama 3.1 8B, 3.1 70B and 3.3 70B; Mistral Small 3.1 and 3.2;
Qwen3 14B and 32B; and GPT-4o mini. This gives two version pairs (Llama 3.1/3.3
70B, Mistral Small 3.1/3.2) and two size pairs (Llama 3.1 8B/70B, Qwen3 14B/32B).
Four judges from four labs, Claude Haiku 5.5, GPT-4.1 nano, Gemini 2.5 Flash-Lite
and DeepSeek V3.2, saw the prompt and two or three anonymous answers and replied
with one letter, sampled at temperature 1. Per prompt each judged all 28 pairs in
both orders; each sibling pair with four fixed outsiders $T$ in three cyclic
orders (sibling triads $\{S_1,S_2,T\}$); and two placebos with $S_2$ replaced by a
model $U$ from a third lab, one drawn at random and one matched to $S_2$'s
strength (chosen from the first two judges' scores before the last two ran; it
differs from the random one in 10 of 16 designs): 138,400 judgements in all.

\paragraph{Test.} Luce's axiom says the odds of $T$ over $S_1$ do not depend on the
third option. For each of the 16 designs,
\begin{equation}
\delta = \log\frac{P(T)}{P(S_1)}\Big|_{\{S_1,S_2,T\}} - \log\frac{P(T)}{P(S_1)}\Big|_{\{S_1,U,T\}},
\end{equation}
with shares pooled over the three orders. $\delta=0$ under the axiom; $\delta>0$
when the sibling takes its votes mostly from its twin, the similarity effect. Balancing orders does not remove position effects: a judge
that sometimes picks a slot regardless of content pulls shares towards a third,
which shifts $\delta$ whenever $U$ and $S_2$ differ in strength. The matched
placebo removes most of this without a model of the judge. We also fit, per judge
and to all its judgements, $P(i) = (1-\varepsilon)\,\mathrm{NL}_i +
\varepsilon/n$, a nested logit on utilities $\mu s_i + b_{\text{slot}}$ in which
related answers share a nest with dissimilarity $\lambda_{\text{rel}}$, plus a
uniform lapse $\varepsilon$; $\mu$ is 1 on pairs and free on triads, and $\lambda$
is estimated separately for two- and three-answer menus. On pairs the nest only
rescales the score gap by $k=1/\lambda$ (\cref{eq:k}); on triads $\lambda<1$ is
the red bus, given the judge's own scores, position habits and lapse.

\begin{figure}[t]
\centering
\includegraphics[width=\columnwidth]{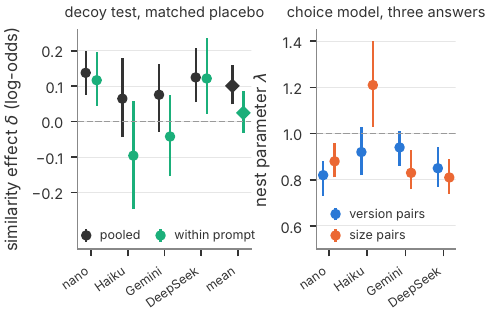}
\caption{Left: similarity effect $\delta$ per judge against the strength-matched
placebo, pooled over prompts (mean over 16 designs) and within prompts
(Mantel--Haenszel over design-by-prompt strata); diamonds: mean of the four judges.
Right: the choice model's nest parameter on three-answer menus; $\lambda=1$ is
Luce, $\lambda<1$ the red bus. Bars: 95\% intervals from resampling prompts.}
\label{fig:decoy}
\end{figure}

\paragraph{Results.} Against the matched placebo, a near-copy of $S_1$ in place of
an unrelated answer raises the odds of the outsider over $S_1$ for every judge
($\delta$ from $+0.07$ to $+0.14$, \cref{fig:decoy}). The
mean of the four is $+0.10$ ($+0.05$ to $+0.16$), and 51 of
64 designs are above zero; against the random placebo the mean is also $+0.10$
($+0.04$ to $+0.15$), so the shift is not an artefact of unequal strength. The
choice model agrees: the three-answer $\lambda$ is below 1 in seven of eight
judge-relation cells, with intervals excluding 1 in five (\cref{tab:judges}),
and the fitted lapse rates are small (0.00 to 0.07). A
$\lambda$ of 0.8 to 0.9 corresponds to $k$ of 1.1 to 1.25, the size of Arena's
multipliers. The exception is Claude Haiku 5.5 on size
pairs ($\lambda=1.21$, 1.03 to 1.40), where the larger sibling takes share from
the outsider instead. The two judges that played no part in choosing the placebo
give the same mean, $+0.10$ ($+0.02$ to $+0.17$; \cref{app:judges}).

\paragraph{Pooled or within a prompt?} $\delta$ pools choices over prompts. If each
prompt's choice obeyed Luce but siblings were strong and weak on the same
prompts, pooled odds could still move with the third option. Each prompt's
answers are fixed and judged three times per menu, so we also compare odds within
prompts (Mantel--Haenszel over design-by-prompt strata). There the shift is
$+0.03$ ($-0.03$ to $+0.09$), positive for GPT-4.1 nano and DeepSeek V3.2 ($+0.12$
each, intervals above zero) but not the other two (\cref{fig:decoy}). In the
pairs, a model's win share on a prompt moves with its sibling's (correlation
across prompts 0.35 for version pairs, 0.27 for size pairs and 0.25 for the
same-lab pair, against 0.09 for unrelated pairs; all excesses exclude zero). The
attenuation after conditioning on prompts, together with this correlated
performance, is consistent with prompt heterogeneity contributing to the pooled
menu effect; it does not rule out a smaller within-prompt effect, which two
judges show. Read as latent Gaussian correlations, which win shares need not be,
these would correspond to $k$ of 1.19, 1.12 and 1.10 under
\cref{prop:thurstone} (an illustration only). Nor do pairs and triads share one
nested logit: pair-fitted nest parameters, carried to triads, fit worse than
$\lambda=1$ for three of four judges, and Haiku's triad size value (1.21) lies
outside the range a random-utility nested logit allows.

\paragraph{Human ratings.} In PRISM \citep{kirk2024prism}, 1,396 people rated two
to four answers from 21 late-2023 models on one screen. Ratings of related answers
in a menu move together (residual correlation 0.50, 0.27 and 0.33 for versions,
sizes and same lab, against 0.14 for unrelated; all excesses exclude zero), as
correlated noise would. In choices the red-bus shift is clear only for Claude
2.0/2.1 ($+0.40$, $0.08$ to $0.69$; \cref{app:prism}).

\section{Discussion and Limitations}
\label{sec:discussion}

Arena's ranks are robust: most votes are between unrelated models, and a
correction for relatives moves the order very little. Its implied head-to-head
odds between siblings are too close to even, by about 9\% on the log-odds scale on
held-out 2024 votes and 3 to 11\% when scores come from unrelated votes alone, and
those are the numbers a lab reads when it asks whether a
new version beats the old one; the per-vote gain from correcting them is small. The same rule sits under reward models
trained from pairwise preferences, but there it scores an answer given its
prompt, $r(x,y)$, not a model across prompts, and an aggregation-driven failure
of model-level scores need not carry over. Our hypothesis is that it partly does:
two samples of one policy are near-copies on the same prompt. If so, direct
preference optimisation, which uses one $\beta$ in $\operatorname{logit}
P(y_w \succ y_l) = \beta(r_w - r_l)$, would learn reward gaps on such pairs $k$
times too large, and a similarity-scaled $\beta_{ij}=\beta\,k_{ij}$ would put
them on one scale. The Arena evidence does not test this.

\paragraph{Limitations.} That one score per
model misprices relatives in human Arena votes is shown directly; why is not:
correlated noise (\cref{prop:thurstone}) and comparison on differences fit the
pattern. The judges' menu effect is weaker within prompts; Arena prompts are
judged once, so only topics can be controlled there. In human choices the menu
effect is clear for one version pair only. The Arena effect rests on 17
version and 49 size pairs from 12 and 21 families. Randomised human pairings in MultiPref
\citep{miranda2024hybrid} point the same way (size 1.47, same lab 1.11) but are
too few to confirm it (\cref{app:multipref}). Arena votes are observational, though the timing checks
and placebos address the most obvious selection. We cannot separate per-vote similarity from voter
heterogeneity over attributes that siblings share, nor, beyond coarse prompt
strata, answer similarity from the prompt mix of each pair. The lineage labels
were fixed in advance, but the similarity measures and model forms were developed
on these data; the frozen transfer test is retrospective, not confirmatory.

\section*{Software and Data}
All data are public (Arena battle log, the 100k and 140k human-preference
releases, PRISM and MultiPref). Code that downloads the data and rebuilds every number, table and
figure, together with the generated answers and judgements of
\cref{sec:judges}, will be released.

\section*{Impact Statement}
This paper studies how public leaderboards and preference-trained reward models
turn pairwise judgements into scores. Its main practical consequence is that
reported head-to-head comparisons between related model versions can be
miscalibrated; correcting them makes evaluation claims more accurate. We use only
public, anonymised voting and rating data and synthetic judgements from language models; no
new human data were collected.

\bibliography{references}

\begin{thebibliography}{51}
\providecommand{\natexlab}[1]{#1}
\providecommand{\url}[1]{\texttt{#1}}
\expandafter\ifx\csname urlstyle\endcsname\relax
  \providecommand{\doi}[1]{doi: #1}\else
  \providecommand{\doi}{doi: \begingroup \urlstyle{rm}\Url}\fi

\bibitem[Ameli et~al.(2025)Ameli, Zhuang, Stoica, and
  Mahoney]{ameli2025statistical}
Ameli, S., Zhuang, S., Stoica, I., and Mahoney, M.~W.
\newblock A statistical framework for ranking {LLM}-based chatbots.
\newblock In \emph{International Conference on Learning Representations
  (ICLR)}, 2025.

\bibitem[Becker et~al.(1963)Becker, DeGroot, and
  Marschak]{becker1963probabilities}
Becker, G.~M., DeGroot, M.~H., and Marschak, J.
\newblock Probabilities of choices among very similar objects: An experiment to
  decide between two models.
\newblock \emph{Behavioral Science}, 8\penalty0 (4):\penalty0 306--311, 1963.

\bibitem[Benson et~al.(2016)Benson, Kumar, and Tomkins]{benson2016relevance}
Benson, A.~R., Kumar, R., and Tomkins, A.
\newblock On the relevance of irrelevant alternatives.
\newblock In \emph{Proceedings of the 25th International Conference on World
  Wide Web (WWW)}, pp.\  963--973, 2016.

\bibitem[Boubdir et~al.(2024)Boubdir, Kim, Ermis, Hooker, and
  Fadaee]{boubdir2024elo}
Boubdir, M., Kim, E., Ermis, B., Hooker, S., and Fadaee, M.
\newblock Elo uncovered: Robustness and best practices in language model
  evaluation.
\newblock In \emph{Advances in Neural Information Processing Systems},
  volume~37, pp.\  106135--106161, 2024.
\newblock Earlier version in Proceedings of the Third Workshop on Natural
  Language Generation, Evaluation, and Metrics (GEM), 2023, pp. 339--352.

\bibitem[Bower \& Balzano(2020)Bower and Balzano]{bower2020preference}
Bower, A. and Balzano, L.
\newblock Preference modeling with context-dependent salient features.
\newblock In \emph{Proceedings of the 37th International Conference on Machine
  Learning (ICML)}, volume 119 of \emph{Proceedings of Machine Learning
  Research}, pp.\  1067--1077, 2020.

\bibitem[Bradley \& Terry(1952)Bradley and Terry]{bradley1952rank}
Bradley, R.~A. and Terry, M.~E.
\newblock Rank analysis of incomplete block designs: {I}. {T}he method of
  paired comparisons.
\newblock \emph{Biometrika}, 39\penalty0 (3/4):\penalty0 324--345, 1952.

\bibitem[Busemeyer \& Townsend(1993)Busemeyer and
  Townsend]{busemeyer1993decision}
Busemeyer, J.~R. and Townsend, J.~T.
\newblock Decision field theory: A dynamic-cognitive approach to decision
  making in an uncertain environment.
\newblock \emph{Psychological Review}, 100\penalty0 (3):\penalty0 432--459,
  1993.

\bibitem[Chen \& Joachims(2016)Chen and Joachims]{chen2016modeling}
Chen, S. and Joachims, T.
\newblock Modeling intransitivity in matchup and comparison data.
\newblock In \emph{Proceedings of the Ninth ACM International Conference on Web
  Search and Data Mining (WSDM)}, pp.\  227--236, 2016.

\bibitem[Cherapanamjeri et~al.(2026)Cherapanamjeri, Daskalakis, Farina, and
  Mohammadpour]{cherapanamjeri2026learning}
Cherapanamjeri, Y., Daskalakis, C., Farina, G., and Mohammadpour, S.
\newblock Learning correlated reward models: Statistical barriers and
  opportunities.
\newblock In \emph{International Conference on Learning Representations
  (ICLR)}, 2026.

\bibitem[Chiang et~al.(2024)Chiang, Zheng, Sheng, Angelopoulos, Li, Li, Zhu,
  Zhang, Jordan, Gonzalez, and Stoica]{chiang2024chatbot}
Chiang, W.-L., Zheng, L., Sheng, Y., Angelopoulos, A.~N., Li, T., Li, D., Zhu,
  B., Zhang, H., Jordan, M.~I., Gonzalez, J.~E., and Stoica, I.
\newblock Chatbot arena: An open platform for evaluating {LLM}s by human
  preference.
\newblock In \emph{Proceedings of the 41st International Conference on Machine
  Learning (ICML)}, volume 235 of \emph{Proceedings of Machine Learning
  Research}, pp.\  8359--8388, 2024.

\bibitem[Christiano et~al.(2017)Christiano, Leike, Brown, Martic, Legg, and
  Amodei]{christiano2017deep}
Christiano, P.~F., Leike, J., Brown, T., Martic, M., Legg, S., and Amodei, D.
\newblock Deep reinforcement learning from human preferences.
\newblock In \emph{Advances in Neural Information Processing Systems}, 2017.

\bibitem[Davidson(1970)]{davidson1970extending}
Davidson, R.~R.
\newblock On extending the {B}radley-{T}erry model to accommodate ties in
  paired comparison experiments.
\newblock \emph{Journal of the American Statistical Association}, 65\penalty0
  (329):\penalty0 317--328, 1970.

\bibitem[Debreu(1960)]{debreu1960review}
Debreu, G.
\newblock Review of {R}. {D}. {L}uce, {I}ndividual choice behavior: A
  theoretical analysis.
\newblock \emph{American Economic Review}, 50:\penalty0 186--188, 1960.

\bibitem[Dubois et~al.(2024)Dubois, Liang, and Hashimoto]{dubois2024length}
Dubois, Y., Liang, P., and Hashimoto, T.~B.
\newblock Length-controlled {AlpacaEval}: A simple debiasing of automatic
  evaluators.
\newblock In \emph{Conference on Language Modeling (COLM)}, 2024.

\bibitem[Glenn \& David(1960)Glenn and David]{glenn1960ties}
Glenn, W.~A. and David, H.~A.
\newblock Ties in paired-comparison experiments using a modified
  {T}hurstone-{M}osteller model.
\newblock \emph{Biometrics}, 16\penalty0 (1):\penalty0 86--109, 1960.

\bibitem[Haghtalab et~al.(2026)Haghtalab, Procaccia, Shao, Wang, and
  Yang]{haghtalab2026pluralistic}
Haghtalab, N., Procaccia, A.~D., Shao, H., Wang, S.~L., and Yang, K.
\newblock Pluralistic leaderboards.
\newblock In \emph{Proceedings of the 43rd International Conference on Machine
  Learning (ICML)}, volume 306 of \emph{Proceedings of Machine Learning
  Research}, pp.\  39356--39375, 2026.

\bibitem[Huber et~al.(1982)Huber, Payne, and Puto]{huber1982adding}
Huber, J., Payne, J.~W., and Puto, C.
\newblock Adding asymmetrically dominated alternatives: Violations of
  regularity and the similarity hypothesis.
\newblock \emph{Journal of Consumer Research}, 9\penalty0 (1):\penalty0 90--98,
  1982.

\bibitem[Kirk et~al.(2024)Kirk, Whitefield, R{\"o}ttger, Bean, Margatina,
  Mosquera-Gomez, Ciro, Bartolo, Williams, He, Vidgen, and Hale]{kirk2024prism}
Kirk, H.~R., Whitefield, A., R{\"o}ttger, P., Bean, A.~M., Margatina, K.,
  Mosquera-Gomez, R., Ciro, J., Bartolo, M., Williams, A., He, H., Vidgen, B.,
  and Hale, S.~A.
\newblock The {PRISM} alignment dataset: What participatory, representative and
  individualised human feedback reveals about the subjective and multicultural
  alignment of large language models.
\newblock In \emph{Advances in Neural Information Processing Systems (Datasets
  and Benchmarks Track)}, 2024.

\bibitem[Li et~al.(2024)Li, Angelopoulos, and Chiang]{li2024stylecontrol}
Li, T., Angelopoulos, A., and Chiang, W.-L.
\newblock Does style matter? {D}isentangling style and substance in {Chatbot
  Arena}.
\newblock LMSYS Org blog,
  \url{https://lmsys.org/blog/2024-08-28-style-control/}, 2024.
\newblock Posted August 29, 2024.

\bibitem[{LMArena}(2024)]{lmarena2024hp100k}
{LMArena}.
\newblock Arena human preference 100k.
\newblock Hugging Face dataset
  \url{https://huggingface.co/datasets/lmarena-ai/arena-human-preference-100k},
  2024.

\bibitem[{LMArena}(2025)]{lmarena2025hp140k}
{LMArena}.
\newblock Arena human preference 140k.
\newblock Hugging Face dataset
  \url{https://huggingface.co/datasets/lmarena-ai/arena-human-preference-140k},
  CC-BY-4.0, 2025.

\bibitem[Luce(1959)]{luce1959individual}
Luce, R.~D.
\newblock \emph{Individual Choice Behavior: A Theoretical Analysis}.
\newblock Wiley, 1959.

\bibitem[McFadden(1974)]{mcfadden1974conditional}
McFadden, D.
\newblock Conditional logit analysis of qualitative choice behavior.
\newblock In Zarembka, P. (ed.), \emph{Frontiers in Econometrics}, pp.\
  105--142. Academic Press, New York, 1974.

\bibitem[McFadden(1978)]{mcfadden1978modeling}
McFadden, D.
\newblock Modeling the choice of residential location.
\newblock \emph{Transportation Research Record}, \penalty0 (673):\penalty0
  72--77, 1978.

\bibitem[McFadden \& Train(2000)McFadden and Train]{mcfadden2000mixed}
McFadden, D. and Train, K.
\newblock Mixed {MNL} models for discrete response.
\newblock \emph{Journal of Applied Econometrics}, 15\penalty0 (5):\penalty0
  447--470, 2000.

\bibitem[Mellers \& Biagini(1994)Mellers and Biagini]{mellers1994similarity}
Mellers, B.~A. and Biagini, K.
\newblock Similarity and choice.
\newblock \emph{Psychological Review}, 101\penalty0 (3):\penalty0 505--518,
  1994.

\bibitem[Miranda et~al.(2025)Miranda, Wang, Elazar, Kumar, Pyatkin, Brahman,
  Smith, Hajishirzi, and Dasigi]{miranda2024hybrid}
Miranda, L. J.~V., Wang, Y., Elazar, Y., Kumar, S., Pyatkin, V., Brahman, F.,
  Smith, N.~A., Hajishirzi, H., and Dasigi, P.
\newblock Hybrid preferences: Learning to route instances for human vs. {AI}
  feedback.
\newblock In \emph{Proceedings of the 63rd Annual Meeting of the Association
  for Computational Linguistics (Volume 1: Long Papers)}, pp.\  7162--7200,
  2025.

\bibitem[Munos et~al.(2024)Munos, Valko, Calandriello, Azar, Rowland, Guo,
  Tang, Geist, Mesnard, Fiegel, Michi, Selvi, Girgin, Momchev, Bachem,
  Mankowitz, Precup, and Piot]{munos2024nash}
Munos, R., Valko, M., Calandriello, D., Azar, M.~G., Rowland, M., Guo, Z.~D.,
  Tang, Y., Geist, M., Mesnard, T., Fiegel, C., Michi, A., Selvi, M., Girgin,
  S., Momchev, N., Bachem, O., Mankowitz, D.~J., Precup, D., and Piot, B.
\newblock {N}ash learning from human feedback.
\newblock In \emph{Proceedings of the 41st International Conference on Machine
  Learning (ICML)}, volume 235 of \emph{Proceedings of Machine Learning
  Research}, pp.\  36743--36768, 2024.

\bibitem[Ouyang et~al.(2022)Ouyang, Wu, Jiang, Almeida, Wainwright, Mishkin,
  Zhang, Agarwal, Slama, Ray, et~al.]{ouyang2022training}
Ouyang, L., Wu, J., Jiang, X., Almeida, D., Wainwright, C.~L., Mishkin, P.,
  Zhang, C., Agarwal, S., Slama, K., Ray, A., et~al.
\newblock Training language models to follow instructions with human feedback.
\newblock In \emph{Advances in Neural Information Processing Systems}, 2022.

\bibitem[Rafailov et~al.(2023)Rafailov, Sharma, Mitchell, Ermon, Manning, and
  Finn]{rafailov2023direct}
Rafailov, R., Sharma, A., Mitchell, E., Ermon, S., Manning, C.~D., and Finn, C.
\newblock Direct preference optimization: Your language model is secretly a
  reward model.
\newblock In \emph{Advances in Neural Information Processing Systems}, 2023.

\bibitem[Ragain \& Ugander(2016)Ragain and Ugander]{ragain2016pairwise}
Ragain, S. and Ugander, J.
\newblock Pairwise choice {M}arkov chains.
\newblock In \emph{Advances in Neural Information Processing Systems},
  volume~29, 2016.

\bibitem[Reimers \& Gurevych(2019)Reimers and Gurevych]{reimers2019sentence}
Reimers, N. and Gurevych, I.
\newblock Sentence-{BERT}: Sentence embeddings using {Siamese} {BERT}-networks.
\newblock In \emph{Proceedings of the 2019 Conference on Empirical Methods in
  Natural Language Processing and the 9th International Joint Conference on
  Natural Language Processing (EMNLP-IJCNLP)}, pp.\  3982--3992, 2019.

\bibitem[Roe et~al.(2001)Roe, Busemeyer, and Townsend]{roe2001multialternative}
Roe, R.~M., Busemeyer, J.~R., and Townsend, J.~T.
\newblock Multialternative decision field theory: A dynamic connectionist model
  of decision making.
\newblock \emph{Psychological Review}, 108\penalty0 (2):\penalty0 370--392,
  2001.

\bibitem[Rumelhart \& Greeno(1971)Rumelhart and
  Greeno]{rumelhart1971similarity}
Rumelhart, D.~L. and Greeno, J.~G.
\newblock Similarity between stimuli: An experimental test of the {L}uce and
  {R}estle choice models.
\newblock \emph{Journal of Mathematical Psychology}, 8\penalty0 (3):\penalty0
  370--381, 1971.

\bibitem[Seshadri \& Ugander(2019)Seshadri and
  Ugander]{seshadri2019fundamental}
Seshadri, A. and Ugander, J.
\newblock Fundamental limits of testing the independence of irrelevant
  alternatives in discrete choice.
\newblock In \emph{Proceedings of the 2019 ACM Conference on Economics and
  Computation (EC)}, pp.\  65--66, 2019.
\newblock Extended version: arXiv:2001.07042.

\bibitem[Seshadri et~al.(2019)Seshadri, Peysakhovich, and
  Ugander]{seshadri2019discovering}
Seshadri, A., Peysakhovich, A., and Ugander, J.
\newblock Discovering context effects from raw choice data.
\newblock In \emph{Proceedings of the 36th International Conference on Machine
  Learning (ICML)}, volume~97 of \emph{Proceedings of Machine Learning
  Research}, pp.\  5660--5669, 2019.

\bibitem[Singh et~al.(2025)Singh, Nan, Wang, et~al.]{singh2025leaderboard}
Singh, S., Nan, Y., Wang, A., et~al.
\newblock The leaderboard illusion.
\newblock In \emph{Advances in Neural Information Processing Systems (Datasets
  and Benchmarks Track)}, 2025.

\bibitem[Siththaranjan et~al.(2024)Siththaranjan, Laidlaw, and
  Hadfield-Menell]{siththaranjan2024distributional}
Siththaranjan, A., Laidlaw, C., and Hadfield-Menell, D.
\newblock Distributional preference learning: Understanding and accounting for
  hidden context in {RLHF}.
\newblock In \emph{International Conference on Learning Representations
  (ICLR)}, 2024.

\bibitem[Sun et~al.(2025)Sun, Shen, and Ton]{sun2025rethinking}
Sun, H., Shen, Y., and Ton, J.-F.
\newblock Rethinking reward modeling in preference-based large language model
  alignment.
\newblock In \emph{International Conference on Learning Representations
  (ICLR)}, 2025.
\newblock arXiv:2411.04991, titled ``Rethinking {Bradley-Terry} Models in
  Preference-Based Reward Modeling: Foundations, Theory, and Alternatives''.

\bibitem[Swamy et~al.(2024)Swamy, Dann, Kidambi, Wu, and
  Agarwal]{swamy2024minimaximalist}
Swamy, G., Dann, C., Kidambi, R., Wu, Z.~S., and Agarwal, A.
\newblock A minimaximalist approach to reinforcement learning from human
  feedback.
\newblock In \emph{Proceedings of the 41st International Conference on Machine
  Learning (ICML)}, volume 235 of \emph{Proceedings of Machine Learning
  Research}, pp.\  47345--47377, 2024.

\bibitem[Thurstone(1927)]{thurstone1927law}
Thurstone, L.~L.
\newblock A law of comparative judgment.
\newblock \emph{Psychological Review}, 34\penalty0 (4):\penalty0 273--286,
  1927.

\bibitem[Tomlinson \& Benson(2021)Tomlinson and Benson]{tomlinson2021learning}
Tomlinson, K. and Benson, A.~R.
\newblock Learning interpretable feature context effects in discrete choice.
\newblock In \emph{Proceedings of the 27th ACM SIGKDD Conference on Knowledge
  Discovery and Data Mining (KDD)}, pp.\  1582--1592, 2021.

\bibitem[Train(2009)]{train2009discrete}
Train, K.~E.
\newblock \emph{Discrete Choice Methods with Simulation}.
\newblock Cambridge University Press, 2nd edition, 2009.

\bibitem[Tversky(1972)]{tversky1972elimination}
Tversky, A.
\newblock Elimination by aspects: A theory of choice.
\newblock \emph{Psychological Review}, 79\penalty0 (4):\penalty0 281--299,
  1972.

\bibitem[Tversky \& Russo(1969)Tversky and Russo]{tversky1969substitutability}
Tversky, A. and Russo, J.~E.
\newblock Substitutability and similarity in binary choices.
\newblock \emph{Journal of Mathematical Psychology}, 6\penalty0 (1):\penalty0
  1--12, 1969.

\bibitem[Wang et~al.(2024)Wang, Li, Chen, Cai, Zhu, Lin, Cao, Kong, Liu, Liu,
  and Sui]{wang2023fair}
Wang, P., Li, L., Chen, L., Cai, Z., Zhu, D., Lin, B., Cao, Y., Kong, L., Liu,
  Q., Liu, T., and Sui, Z.
\newblock Large language models are not fair evaluators.
\newblock In \emph{Proceedings of the 62nd Annual Meeting of the Association
  for Computational Linguistics (Volume 1: Long Papers)}, pp.\  9440--9450,
  2024.

\bibitem[Wang et~al.(2020)Wang, Wei, Dong, Bao, Yang, and Zhou]{wang2020minilm}
Wang, W., Wei, F., Dong, L., Bao, H., Yang, N., and Zhou, M.
\newblock {MiniLM}: Deep self-attention distillation for task-agnostic
  compression of pre-trained transformers.
\newblock In \emph{Advances in Neural Information Processing Systems}, 2020.

\bibitem[Wu et~al.(2024)Wu, Xie, Yang, Wu, Gao, Ding, Wang, and
  He]{wu2024betadpo}
Wu, J., Xie, Y., Yang, Z., Wu, J., Gao, J., Ding, B., Wang, X., and He, X.
\newblock $\beta$-{DPO}: Direct preference optimization with dynamic $\beta$.
\newblock In \emph{Advances in Neural Information Processing Systems},
  volume~37, pp.\  129944--129966, 2024.

\bibitem[Xing(2026)]{xing2026attention}
Xing, W.
\newblock Attention limited reward learning.
\newblock \emph{arXiv preprint arXiv:2607.04590}, 2026.

\bibitem[Zhang et~al.(2025)Zhang, Zhang, Wu, Xu, and Gu]{zhang2024beyond}
Zhang, Y., Zhang, G., Wu, Y., Xu, K., and Gu, Q.
\newblock Beyond {Bradley-Terry} models: A general preference model for
  language model alignment.
\newblock In \emph{Proceedings of the 42nd International Conference on Machine
  Learning (ICML)}, volume 267 of \emph{Proceedings of Machine Learning
  Research}, pp.\  76939--76965, 2025.

\bibitem[Zheng et~al.(2023)Zheng, Chiang, Sheng, Zhuang, Wu, Zhuang, Lin, Li,
  Li, Xing, Zhang, Gonzalez, and Stoica]{zheng2023judging}
Zheng, L., Chiang, W.-L., Sheng, Y., Zhuang, S., Wu, Z., Zhuang, Y., Lin, Z.,
  Li, Z., Li, D., Xing, E.~P., Zhang, H., Gonzalez, J.~E., and Stoica, I.
\newblock Judging {LLM}-as-a-judge with {MT-Bench} and {Chatbot Arena}.
\newblock In \emph{Advances in Neural Information Processing Systems (Datasets
  and Benchmarks Track)}, 2023.

\end{thebibliography}
\bibliographystyle{icml2026}

\newpage
\appendix
\onecolumn

\section{Robustness Details}
\label{app:robust}

\paragraph{Lineage judgement calls.} Dropping every related pair that involves one
of the thirteen models whose tag is a judgement call (2 version, 10 size and 24
same-lab pairs) gives \x1.44 (1.05 to 1.85), \x1.27 (1.17 to 1.34) and \x1.12
(1.08 to 1.18).

\paragraph{Prompts and thresholds.} English only, first turn only, de-duplicated
prompts, code and non-code, creative and technical prompts, 50 to 200 votes per
pair, and bins of score closeness all give version 1.31 to 1.44, size 1.12 to
1.29 and lab 1.08 to 1.13. The hardest and easiest prompts (six or more, and two
or fewer, of Arena's seven hard-prompt criteria) give \x1.29, \x1.09, \x1.11 and
\x1.20, \x1.51, \x1.20: all above 1, with the size effect the least stable.
Reweighting each pair's votes to the prompt mix of unrelated pairs (24 strata of
difficulty, code, language and turn) gives \x1.37 (1.16 to 1.79), \x1.25 and
\x1.13; relatives and unrelated pairs already see similar prompts. Fitting each calendar quarter separately gives
positive version effects in every quarter with version pairs and positive size
effects from late 2023 on (the two earlier quarters have two and eight size
pairs).

\paragraph{Timing and launch sampling.} Arena samples a new model heavily after
launch, often against its predecessor, and models and voters drift. If siblings
were mostly compared with their opponents in different weeks, pooled scores could
make them look decisive for reasons unrelated to similarity. Four checks argue
against it. With a separate score for every model in every week, so that a pair is only
compared with opponents voted on in the same week, the multipliers are \x1.36,
\x1.23 and \x1.13. Dropping each model's first four weeks makes the version
effect larger (\x1.64), not smaller. Unrelated pairs that entered the arena within
14 days of each other (387 pairs) get \x0.99, and the share of a pair's votes cast
at launch has no effect (\x0.99 per standard deviation). The same-week model-free
slopes are 1.28 (95\% interval 1.10 to 1.47), 1.24 and 1.11 for related pairs and
0.99 for unrelated pairs. The 140k release mostly agrees (week-local multipliers \x1.21, \x1.35 and
\x1.24), though dropping launch weeks there leaves too few size pairs to say much.

\begin{figure*}[h]
\centering
\includegraphics[width=\textwidth]{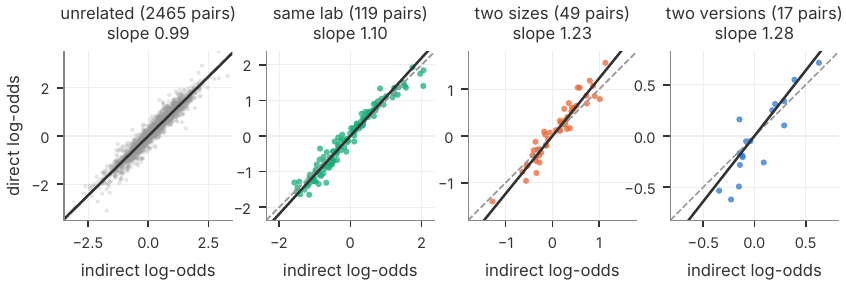}
\caption{Model-free check on the 2024 log. Each point is a model pair: the
direct head-to-head log-odds against the log-odds implied through shared
opponents. Dashed: Bradley--Terry (slope 1). Solid: inverse-variance weighted
fit through the origin. Unrelated pairs follow the product rule; related pairs
are more lopsided than the chain implies.}
\label{fig:direct_indirect}
\end{figure*}

\section{Prompt Topics}
\label{app:topics}

\paragraph{Clustering.} Each vote's first user turn (up to 2,000 characters) is
represented by TF-IDF (terms in at least five prompts, at most 50,000 terms,
sublinear term frequency, English stop words removed), reduced to 100 dimensions
by truncated SVD, normalised to unit length and clustered by $k$-means with 20 and
50 topics (three restarts, fixed seed). Topics hold 589 to 37,478 non-tied votes in
the 140k release and 357 to 20,986 in the 100k release. Only the 100k and 140k
releases carry prompts.

\paragraph{Topic profiles.} For each model and topic, the standardised residual
against plain Bradley--Terry fitted on all non-tied votes (sum of $y-\hat p$ over
the model's votes in the topic, divided by the binomial standard deviation),
leaving out the games between the two models being compared. For every pair of
models with at least 1,000 non-tied votes each, the correlation of the two
profiles across topics; related minus unrelated pairs, with 95\% intervals from
resampling topics (200 draws).

\paragraph{Topic-specific scores.} Bradley--Terry with one multiplier per relation
on non-tied votes of pairs with at least 30 (140k) or 50 (100k) votes, fitted with
$\operatorname{logit}P(a\succ b)=k_{\mathrm{rel}}\,(s_a+u_{a,c}-s_b-u_{b,c})$ for
prompt topic $c$ and a ridge $\rho\sum u^2$ ($\rho=10$, and 1 as a check), against
the same model with $u=0$. Intervals resample model pairs (200 draws).

\paragraph{Results.} With 20 topics and $\rho=10$, the multipliers with
topic-specific scores are, for version, size and same lab, \x1.46 (1.08 to an
unstable upper end: only 9 version pairs), \x1.20 (1.02 to 1.52) and \x1.24 (1.16
to 1.53) in the 140k release, against \x1.24, \x1.15 and \x1.17 with one score;
the ratio of the two is 1.00 to 1.21 for size and 1.05 to 1.17 for same lab. In
the 100k release they are \x1.65, \x1.40 (1.08 to 2.03) and \x1.23 (1.13 to 1.54),
against \x1.57, \x1.30 and \x1.19. With 50 topics they are larger still. Topics
are coarse: prompt heterogeneity within a topic remains, so the result shows that
the multipliers persist after controlling for these topics, not that prompt
pooling plays no part (\texttt{results/*/prompt\_topics.txt}).

\section{Sibling Pairs in the 2024 Test}
\label{app:pairs}
Pairs with at least 50 non-tied votes in the 2024 battle log, by relation, from
\texttt{src/lineage.py}; the 119 same-lab pairs are listed in
\texttt{results/2024/pairs.csv}.

\paragraph{Version pairs (17).} {\small \texttt{bard-jan-24-gemini-pro} / \texttt{gemini-pro-dev-api}; \texttt{chatgpt-4o-latest} / \texttt{gpt-4o-2024-05-13}; \texttt{claude-2.0} / \texttt{claude-2.1}; \texttt{gemini-1.5-pro-api-0514} / \texttt{gemini-1.5-pro-exp-0801}; \texttt{gpt-3.5-turbo-0125} / \texttt{gpt-3.5-turbo-0613}; \texttt{gpt-3.5-turbo-0125} / \texttt{gpt-3.5-turbo-1106}; \texttt{gpt-3.5-turbo-0613} / \texttt{gpt-3.5-turbo-1106}; \texttt{gpt-4-0125-preview} / \texttt{gpt-4-1106-preview}; \texttt{gpt-4-0125-preview} / \texttt{gpt-4-turbo-2024-04-09}; \texttt{gpt-4-0314} / \texttt{gpt-4-0613}; \texttt{gpt-4-1106-preview} / \texttt{gpt-4-turbo-2024-04-09}; \texttt{gpt-4o-2024-05-13} / \texttt{gpt-4o-2024-08-06}; \texttt{openchat-3.5} / \texttt{openchat-3.5-0106}; \texttt{phi-3-mini-128k-instruct} / \texttt{phi-3-mini-4k-instruct}; \texttt{starling-lm-7b-alpha} / \texttt{starling-lm-7b-beta}; \texttt{yi-large} / \texttt{yi-large-preview}; \texttt{zephyr-7b-alpha} / \texttt{zephyr-7b-beta}.}

\paragraph{Size pairs (49).} {\small \texttt{chatglm2-6b} / \texttt{chatglm3-6b}; \texttt{chatgpt-4o-latest} / \texttt{gpt-4o-mini-2024-07-18}; \texttt{claude-1} / \texttt{claude-instant-1}; \texttt{claude-3-haiku-20240307} / \texttt{claude-3-opus-20240229}; \texttt{claude-3-haiku-20240307} / \texttt{claude-3-sonnet-20240229}; \texttt{claude-3-opus-20240229} / \texttt{claude-3-sonnet-20240229}; \texttt{command-r} / \texttt{command-r-plus}; \texttt{deepseek-coder-v2} / \texttt{deepseek-v2-api-0628}; \texttt{gemini-1.5-flash-api-0514} / \texttt{gemini-1.5-pro-api-0514}; \texttt{gemini-1.5-flash-api-0514} / \texttt{gemini-1.5-pro-exp-0801}; \texttt{gemini-1.5-flash-api-0514} / \texttt{gemini-advanced-0514}; \texttt{gemini-1.5-pro-api-0514} / \texttt{gemini-advanced-0514}; \texttt{gemini-1.5-pro-exp-0801} / \texttt{gemini-advanced-0514}; \texttt{gemma-1.1-2b-it} / \texttt{gemma-1.1-7b-it}; \texttt{gemma-2-27b-it} / \texttt{gemma-2-2b-it}; \texttt{gemma-2-27b-it} / \texttt{gemma-2-9b-it}; \texttt{gemma-2-2b-it} / \texttt{gemma-2-9b-it}; \texttt{gemma-2b-it} / \texttt{gemma-7b-it}; \texttt{gpt-4o-2024-05-13} / \texttt{gpt-4o-mini-2024-07-18}; \texttt{gpt-4o-2024-08-06} / \texttt{gpt-4o-mini-2024-07-18}; \texttt{llama-2-13b-chat} / \texttt{llama-2-70b-chat}; \texttt{llama-2-13b-chat} / \texttt{llama-2-7b-chat}; \texttt{llama-2-70b-chat} / \texttt{llama-2-7b-chat}; \texttt{llama-3-70b-instruct} / \texttt{llama-3-8b-instruct}; \texttt{llama-3.1-405b-instruct} / \texttt{llama-3.1-70b-instruct}; \texttt{llama-3.1-405b-instruct} / \texttt{llama-3.1-8b-instruct}; \texttt{llama-3.1-70b-instruct} / \texttt{llama-3.1-8b-instruct}; \texttt{mistral-medium} / \texttt{mistral-next}; \texttt{mixtral-8x22b-instruct-v0.1} / \texttt{mixtral-8x7b-instruct-v0.1}; \texttt{mpt-30b-chat} / \texttt{mpt-7b-chat}; \texttt{phi-3-medium-4k-instruct} / \texttt{phi-3-mini-4k-instruct}; \texttt{phi-3-medium-4k-instruct} / \texttt{phi-3-mini-4k-instruct-june-2024}; \texttt{phi-3-medium-4k-instruct} / \texttt{phi-3-small-8k-instruct}; \texttt{phi-3-mini-4k-instruct} / \texttt{phi-3-small-8k-instruct}; \texttt{pplx-70b-online} / \texttt{pplx-7b-online}; \texttt{qwen1.5-110b-chat} / \texttt{qwen1.5-32b-chat}; \texttt{qwen1.5-110b-chat} / \texttt{qwen1.5-72b-chat}; \texttt{qwen1.5-14b-chat} / \texttt{qwen1.5-32b-chat}; \texttt{qwen1.5-14b-chat} / \texttt{qwen1.5-4b-chat}; \texttt{qwen1.5-14b-chat} / \texttt{qwen1.5-72b-chat}; \texttt{qwen1.5-32b-chat} / \texttt{qwen1.5-4b-chat}; \texttt{qwen1.5-32b-chat} / \texttt{qwen1.5-72b-chat}; \texttt{qwen1.5-4b-chat} / \texttt{qwen1.5-72b-chat}; \texttt{qwen1.5-72b-chat} / \texttt{qwen1.5-7b-chat}; \texttt{reka-flash-21b-20240226} / \texttt{reka-flash-21b-20240226-online}; \texttt{vicuna-13b} / \texttt{vicuna-33b}; \texttt{vicuna-13b} / \texttt{vicuna-7b}; \texttt{vicuna-33b} / \texttt{vicuna-7b}; \texttt{wizardlm-13b} / \texttt{wizardlm-70b}.}

\section{Further Held-Out Results}
\label{app:heldout}

Gains over plain Bradley--Terry in millinats per vote, over all test votes,
votes between relatives, and version and size pairs only (means; rivals tuned on validation data; seeds or cutoffs in
\texttt{results/*/fix\_heldout\_validated.csv}, 95\% pair-resampling intervals
for every model and every difference between models in
\texttt{fix\_heldout\_validated\_ci.csv}).

\begin{center}\small
\begin{tabular}{@{}llccc@{}}
\toprule
Release & Model & random votes & unseen pairs & later votes \\
\midrule
2024 & Lineage fix & +0.96 / +6.6 / +11.0 & +1.13 / +8.4 / +13.9 & +0.58 / +11.2 / +17.8 \\
2024 & Covariance + lineage & +1.00 / +6.9 / +10.9 & +0.52 / +7.6 / +12.3 & +0.66 / +12.2 / +17.7 \\
140k & Similarity fix (wording, length gap) & +1.63 / +8.0 / +14.0 & +0.65 / +3.4 / +9.8 & +1.24 / +4.9 / +7.8 \\
140k & Similarity fix with style & +1.70 / +8.3 / +14.2 & +0.78 / +3.6 / +9.6 & +1.25 / +5.0 / +7.1 \\
140k & Covariance + lineage & +0.08 / +5.1 / +9.9 & $-$1.71 / +1.3 / +5.8 & $-$1.02 / +4.3 / +6.6 \\
\bottomrule
\end{tabular}
\end{center}

On the 2024 log the lineage fix's gain on relatives ranges from +6.0 to +7.5
across vote seeds, +4.5 to +12.4 across pair seeds and +8.7 to +14.5 across time
cutoffs.

\section{Ties versus Win Odds in the Held-Out Tests}
\label{app:ablation}

\begin{table}[h]
\caption{Held-out gain in three-outcome log loss over plain Bradley--Terry over
all test votes and over votes between relatives (millinats per vote; means over
seeds or cutoffs; rivals tuned on validation data). Sources: \texttt{results/*/fix\_heldout\_validated.csv}.}
\label{tab:heldout}
\centering\small
\setlength{\tabcolsep}{4pt}
\begin{tabular}{@{}lccc@{}}
\toprule
 & random votes & unseen pairs & later votes \\
\midrule
\multicolumn{4}{@{}l}{\emph{2024 log (129 models)}} \\
Lineage fix & \textbf{+0.96 / +6.6} & \textbf{+1.13 / +8.4} & \textbf{+0.58 / +11.2} \\
Covariance & +0.04 / +0.4 & $-$0.78 / $-$0.1 & +0.03 / +0.5 \\
Gen.\ preference & +0.18 / +0.4 & $-$0.05 / +0.2 & +0.06 / +0.6 \\
Per-pair scale & +0.09 / +0.4 & -- & -- \\
\midrule
\multicolumn{4}{@{}l}{\emph{140k release (53 models)}} \\
Lineage fix & +0.83 / +5.7 & +0.49 / +3.0 & +0.51 / +4.8 \\
Similarity fix & \textbf{+1.70 / +8.3} & \textbf{+0.78 / +3.6} & \textbf{+1.25 / +5.0} \\
Covariance & $-$0.73 / $-$0.5 & $-$2.49 / $-$3.0 & $-$1.76 / $-$0.8 \\
Gen.\ preference & $-$0.41 / $-$0.0 & $-$0.81 / $-$0.5 & $-$0.14 / +0.2 \\
\bottomrule
\end{tabular}
\end{table}

\Cref{tab:heldout} gives the gains over all votes. The fixes' overall gain is
positive at every seed and cutoff, and the lineage fix gains on votes between
relatives at every seed and cutoff in both releases (the similarity fix at all
but one unseen-pair seed, where it gives $-0.04$); on unrelated pairs the 2024
time split loses slightly ($-0.17$ to $-0.89$). The gain between relatives is 4
to 20 times the overall gain. Tuning the rivals on the test set instead changes
their gains by at most 0.4. The covariance model with the six lineage parameters
gives $+1.00$ / $+6.9$ on random 2024 votes, slightly above the lineage fix on
later votes, and falls short of both fixes in the 140k release.

\Cref{tab:ablation} switches each part of the fixes on alone, with the splits,
seeds and cutoffs of \cref{sec:fix} and no tuning
(\texttt{results/*/fix\_ablation.txt}). ``Win odds'' is the log loss of the
winner on non-tied test votes, $-\log\sigma(\hat z)$, to which the tie terms do
not contribute except through the fitted scores. Intervals resample model pairs.
A decisiveness term alone barely changes three-outcome loss because a steeper
gap also lowers the predicted tie probability, which relatives raise. The
calibration slope is the coefficient $c$ in $\operatorname{logit}P = a + c\,\hat z$
fitted to non-tied test votes between relatives ($c>1$: predictions too close to
even).

\begin{table*}[h]
\caption{Each part of the fixes alone: held-out gain over plain Bradley--Terry on
votes between relatives, three-outcome / win-odds log loss (millinats per vote,
pooled over seeds or cutoffs), with 95\% intervals from resampling model pairs
below each row, and the calibration slope on non-tied votes between relatives
(200 pair draws). The similarity fix is the six-parameter version of
\cref{sec:fix} (wording, length gap and style).}
\label{tab:ablation}
\centering\small
\setlength{\tabcolsep}{4pt}
\begin{tabular}{@{}lccc@{}}
\toprule
 & random votes & unseen pairs & later votes \\
\midrule
\multicolumn{4}{@{}l}{\emph{2024 log}} \\
lineage, ties only & +6.3 / 0.00 & +8.0 / $-$0.02 & +10.9 / +0.04 \\
{\scriptsize 95\% interval} & {\scriptsize (+4.2, +8.4) / ($-$0.03, +0.02)} & {\scriptsize (+4.7, +11.6) / ($-$0.06, +0.01)} & {\scriptsize (+7.6, +13.8) / ($-$0.01, +0.08)} \\
lineage, decisiveness only & +0.05 / +0.29 & $-$0.01 / +0.22 & $-$0.24 / +0.14 \\
{\scriptsize 95\% interval} & {\scriptsize ($-$0.04, +0.14) / (+0.13, +0.43)} & {\scriptsize ($-$0.14, +0.11) / (+0.03, +0.41)} & {\scriptsize ($-$0.54, $-$0.03) / ($-$0.33, +0.51)} \\
lineage, both & +6.6 / +0.33 & +8.2 / +0.17 & +10.9 / $-$0.05 \\
{\scriptsize 95\% interval} & {\scriptsize (+4.5, +8.7) / ($-$0.03, +0.65)} & {\scriptsize (+4.8, +11.8) / ($-$0.28, +0.58)} & {\scriptsize (+7.6, +14.1) / ($-$1.01, +0.71)} \\
slope, plain BT & 1.09 (1.06, 1.14) & 1.09 (1.03, 1.15) & 1.05 (0.98, 1.12) \\
slope, lineage, decisiveness only & 1.04 (1.00, 1.08) & 1.03 (0.97, 1.10) & 0.99 (0.91, 1.07) \\
slope, lineage, both & 0.98 (0.94, 1.01) & 0.96 (0.91, 1.01) & 0.93 (0.85, 0.99) \\
\midrule
\multicolumn{4}{@{}l}{\emph{140k release}} \\
lineage, ties only & +5.4 / $-$0.01 & +3.2 / $-$0.04 & +5.0 / $-$0.01 \\
{\scriptsize 95\% interval} & {\scriptsize (+2.6, +8.3) / ($-$0.04, +0.01)} & {\scriptsize ($-$0.04, +6.9) / ($-$0.07, $-$0.01)} & {\scriptsize (+1.4, +9.0) / ($-$0.04, +0.01)} \\
lineage, decisiveness only & $-$0.07 / +0.09 & $-$0.42 / $-$0.34 & $-$0.32 / $-$0.12 \\
{\scriptsize 95\% interval} & {\scriptsize ($-$0.37, +0.25) / ($-$0.35, +0.57)} & {\scriptsize ($-$1.0, +0.13) / ($-$1.21, +0.45)} & {\scriptsize ($-$0.87, +0.23) / ($-$0.87, +0.60)} \\
lineage, both & +5.7 / +0.17 & +2.8 / $-$0.80 & +4.8 / $-$0.36 \\
{\scriptsize 95\% interval} & {\scriptsize (+2.7, +8.7) / ($-$0.68, +1.09)} & {\scriptsize ($-$0.58, +6.5) / ($-$2.37, +0.59)} & {\scriptsize (+0.86, +8.7) / ($-$1.75, +0.99)} \\
slope, plain BT & 1.09 (0.99, 1.22) & 1.02 (0.88, 1.16) & 1.05 (0.93, 1.18) \\
slope, lineage, decisiveness only & 1.03 (0.93, 1.14) & 0.94 (0.79, 1.05) & 0.97 (0.87, 1.11) \\
slope, lineage, both & 0.96 (0.87, 1.07) & 0.88 (0.76, 0.98) & 0.90 (0.81, 1.03) \\
similarity, ties only & +7.6 / $-$0.07 & +2.5 / $-$0.07 & +5.5 / $-$0.07 \\
{\scriptsize 95\% interval} & {\scriptsize (+4.5, +10.8) / ($-$0.12, $-$0.02)} & {\scriptsize ($-$0.57, +5.9) / ($-$0.17, +0.03)} & {\scriptsize (+2.0, +8.6) / ($-$0.16, +0.01)} \\
similarity, decisiveness only & +0.21 / +0.86 & +0.33 / +0.72 & $-$0.65 / $-$0.15 \\
{\scriptsize 95\% interval} & {\scriptsize ($-$0.30, +0.76) / (+0.11, +1.69)} & {\scriptsize ($-$0.60, +1.2) / ($-$0.63, +1.94)} & {\scriptsize ($-$2.0, +0.50) / ($-$1.89, +1.36)} \\
similarity, both & +8.3 / +1.05 & +3.3 / +0.72 & +5.1 / $-$0.38 \\
{\scriptsize 95\% interval} & {\scriptsize (+4.9, +11.6) / ($-$0.03, +2.22)} & {\scriptsize (+0.01, +6.7) / ($-$1.02, +2.38)} & {\scriptsize (+1.2, +8.7) / ($-$2.81, +1.70)} \\
slope, similarity, decisiveness only & 1.02 (0.93, 1.14) & 0.93 (0.80, 1.04) & 0.95 (0.82, 1.08) \\
slope, similarity, both & 0.99 (0.89, 1.09) & 0.90 (0.79, 1.03) & 0.93 (0.80, 1.04) \\
\bottomrule
\end{tabular}

\end{table*}

On all non-tied votes, the decisiveness term alone gains $+0.05$ ($0.03$ to
$0.07$) on random and $+0.04$ ($0.01$ to $0.06$) on unseen 2024 pairs, and the
similarity term $+0.39$ ($0.20$ to $0.56$) on random 2025 votes; the other cells'
intervals include zero. In the 140k release the slopes' intervals include 1 for
plain scores as well.

\section{Frozen Transfer Test}
\label{app:transfer}

The specification, the code and the primary outcome were committed before the
first run (\texttt{src/transfer.py}; \texttt{results/transfer.txt}). Each setting
learns a correction on source votes (all models and votes), freezes its pair
terms, fits one score and one tie tendency per model and a common tie level on
target votes between unrelated models only, and predicts every target vote between
related models. Corrections: the lineage fix; a single temperature, one multiplier
shared by all related pairs with the same tie offsets; tie offsets alone; and the
six-parameter similarity fix, with features standardised by the source mean and
standard deviation and a target pair's features taken from the answer text of its
battles. Settings: the 2024 log to the 140k release; the 100k release to the 140k
release (similarity); and the 2024 log cut at the 60th, 70th and 80th percentile
of time, predicting related pairs with at least one model first seen after the
cut. All cuts are reported. Gains are over plain Bradley--Terry fitted the same
way; intervals resample labs (every related pair belongs to one lab). The rmse is
the noise-corrected error of the predicted head-to-head win rate over pairs with
at least 10 non-tied votes, each pair weighted equally; ``votes worth'' is the
number of head-to-head votes whose sampling error equals it. Tie offsets alone give
the same win odds as plain scores, so their win-odds gain is zero by
construction; their three-outcome gains are $+9.4$, $+7.9$, $+8.9$ and $+8.0$.

\begin{table*}[h]
\caption{Frozen transfer: gains over plain Bradley--Terry on target votes between
related models (millinats per vote; 95\% intervals resampling labs), calibration
slope on non-tied votes, rmse of the predicted head-to-head win rate (percentage
points) and the number of head-to-head votes it is worth, plain $\to$ corrected.}
\label{tab:transfer}
\centering\scriptsize
\setlength{\tabcolsep}{3pt}
\begin{tabular}{@{}lllcccccc@{}}
\toprule
setting & pairs / labs & correction & three-way gain & win-odds gain & slope & rmse (pp) & votes worth \\
\midrule
2024 log $\to$ 140k & 125 / 9 & lineage & +9.74 ($-$0.48, +19.30) & +0.14 ($-$2.92, +1.68) & 1.11 $\to$ 0.95 & 3.1 $\to$ 3.2 & 240 $\to$ 219 \\
 & 125 / 9 & temperature & +9.84 ($-$0.20, +19.54) & +0.36 ($-$2.19, +1.59) & 1.11 $\to$ 0.96 & 3.1 $\to$ 3.0 & 240 $\to$ 248 \\
100k $\to$ 140k & 125 / 9 & similarity & +14.00 (+1.16, +27.29) & +1.39 ($-$1.32, +3.02) & 1.11 $\to$ 1.00 & 3.1 $\to$ 2.8 & 240 $\to$ 290 \\
2024 log, new releases (60th pct.) & 86 / 10 & lineage & +7.76 (+5.86, +10.51) & $-$0.64 ($-$1.51, +1.73) & 1.03 $\to$ 0.88 & 2.2 $\to$ 2.8 & 432 $\to$ 277 \\
 & 86 / 10 & temperature & +7.65 (+5.76, +10.38) & $-$0.86 ($-$1.77, +1.63) & 1.03 $\to$ 0.87 & 2.2 $\to$ 3.0 & 432 $\to$ 244 \\
2024 log, new releases (70th pct.) & 68 / 10 & lineage & +9.26 (+7.73, +11.48) & +0.38 ($-$0.36, +1.69) & 1.08 $\to$ 0.92 & 2.0 $\to$ 2.3 & 546 $\to$ 409 \\
 & 68 / 10 & temperature & +9.12 (+7.63, +11.32) & +0.15 ($-$0.72, +1.57) & 1.08 $\to$ 0.92 & 2.0 $\to$ 2.5 & 546 $\to$ 349 \\
2024 log, new releases (80th pct.) & 57 / 7 & lineage & +8.33 (+6.79, +9.48) & +0.38 ($-$0.17, +1.72) & 1.07 $\to$ 0.92 & 2.5 $\to$ 2.8 & 362 $\to$ 277 \\
 & 57 / 7 & temperature & +8.24 (+6.66, +9.42) & +0.24 ($-$0.48, +1.88) & 1.07 $\to$ 0.92 & 2.5 $\to$ 2.9 & 362 $\to$ 263 \\
\bottomrule
\end{tabular}

\end{table*}

\section{Diagnosing the Frozen Multipliers' Overcorrection}
\label{app:why}

This check was designed after the transfer results were seen. Its three
predictions and the reliability, out-of-sample multiplier and thinning analyses
were committed before it was run; the per-relation and headroom parts were added
after the first results (\texttt{src/transfer\_why.py};
\texttt{results/transfer\_why.txt}). The lab-resampling intervals and the paired
bootstrap were added afterwards and committed, with their decision rule,
before they were run (\texttt{src/transfer\_why\_ci.py};
\texttt{results/transfer\_why\_ci.txt}). The check uses the two targets with the
most related votes: the 140k release (source: the 2024 log) and the 2024 log
after the 70th-percentile cut (pairs with a new release; source: votes before the
cut). Target scores $\hat s$ are plain Bradley--Terry fitted on the target's votes
between unrelated models, and $\hat g_{ij} = \hat s_i - \hat s_j$ for a related
pair. The out-of-sample multiplier $k^\ast$ is the $c$ that best fits the target's
related non-tied votes with $\operatorname{logit} P(i \succ j) = c\,\hat g_{ij}$.
Three explanations make different predictions.
\begin{itemize}
\item \emph{Noise (regression dilution).} With $\hat g = g + e$, the best
multiplier on $\hat g$ is about $k\lambda$, with reliability $\lambda = 1 - \sum n
\operatorname{var}(e) / \sum n \hat g^2$ ($n$: the pair's non-tied votes;
$\operatorname{var}(e)$ from 100 refits on unrelated votes resampled within each
pair). Then $\lambda$ is well below 1, and thinning the unrelated votes lowers
$\lambda$ and $k^\ast$ together.
\item \emph{A shift between data sets.} Then the target's own in-sample
multiplier, applied to $\hat g$, does not overcorrect, while the frozen one does.
\item \emph{In-sample estimation.} When scores and multipliers are fitted together,
sibling votes also shape the scores; then even the target's own in-sample
multiplier exceeds $k^\ast/\lambda$.
\end{itemize}
Intervals come from 1,000 draws that resample the target's related pairs either
by lab, as the transfer test does, or by pair; each draw refits $k^\ast$, the
target's in-sample multiplier and, on an independent draw of the source's related
pairs, the source multipliers. Unrelated votes are held fixed, since their scores
are reliable. A difference counts as established only if both its intervals
exclude zero.

\begin{table}[h]
\caption{Out-of-sample decisiveness on the two targets (multipliers on the
log-odds scale; 95\% intervals resampling pairs in parentheses, labs in
brackets). Gains: win-odds log loss over plain scores on related non-tied votes,
millinats per vote; ``best'' multipliers are fitted to the target's own related
votes, an in-sample upper bound for frozen shared or relation-specific
multipliers applied to fixed target scores, not for better preference prediction
in general. A few lab draws contain no version (5) or size (2) pair of the 2024
target and are left out of those intervals.}
\label{tab:why}
\centering\scriptsize
\begin{tabular}{lcc}
\toprule
 & 140k release & 2024, new releases \\
\midrule
related pairs (non-tied votes), labs & 125 (10,328), 9 & 68 (18,824), 10 \\
reliability $\lambda$ (all / 50\% / 25\% of votes) & 0.99 / 0.98 / 0.96--0.97 & 1.00 / 0.99 / 0.99 \\
$k^\ast$ (all votes) & 1.11 (1.02 to 1.22) [0.89 to 1.28] & 1.10 (1.03 to 1.17) [1.02 to 1.24] \\
$k^\ast$ with 50\% / 25\% of votes (two seeds) & 1.09--1.10 / 1.07--1.14 & 1.09--1.11 / 1.07--1.12 \\
$k^\ast$ version & 1.12 (0.41 to 1.89) [0.53 to 1.94] & 1.43 (0.91 to 2.02) [1.14 to 2.46] \\
$k^\ast$ size & 1.10 (0.95 to 1.30) [0.89 to 1.30] & 1.24 (1.12 to 1.39) [1.16 to 1.61] \\
$k^\ast$ same lab & 1.12 (1.00 to 1.25) [0.85 to 1.28] & 1.06 (0.98 to 1.13) [0.96 to 1.23] \\
target in-sample shared multiplier & 1.16 (1.07 to 1.26) [0.92 to 1.35] & 1.12 (1.05 to 1.19) [1.04 to 1.27] \\
source (frozen) shared multiplier & 1.15 (1.11 to 1.19) [1.07 to 1.20] & 1.17 (1.12 to 1.22) [1.07 to 1.22] \\
slope with target's own / frozen shared & 0.96 / 0.96 & 0.98 / 0.92 \\
frozen lineage multipliers (version / size / lab) & 1.41 / 1.25 / 1.12 & 1.41 / 1.29 / 1.14 \\
target in-sample lineage multipliers (same order) & 1.23 / 1.18 / 1.15 & 1.50 / 1.28 / 1.08 \\
gain, best shared / best per relation & $+0.40$ / $+0.40$ & $+0.34$ / $+0.61$ \\
gain, frozen shared / frozen lineage & $+0.36$ / $+0.14$ & $+0.15$ / $+0.38$ \\
mean $|\hat g|$ (logits) & 0.46 & 0.50 \\
\bottomrule
\end{tabular}
\end{table}

\begin{table}[h]
\caption{Paired bootstrap of the differences behind the explanation (95\%
intervals resampling pairs in parentheses, labs in brackets; 1,000 draws each).
None excludes zero under both schemes.}
\label{tab:whydiff}
\centering\scriptsize
\begin{tabular}{lcc}
\toprule
 & 140k release & 2024, new releases \\
\midrule
joint estimation: in-sample $-\,k^\ast/\lambda$ & $+0.034$ ($+0.011$ to $+0.074$) [$-0.016$ to $+0.075$] & $+0.019$ ($-0.001$ to $+0.039$) [$+0.007$ to $+0.043$] \\
shift: source shared $-$ target in-sample & $-0.01$ ($-0.12$ to $+0.09$) [$-0.22$ to $+0.23$] & $+0.05$ ($-0.03$ to $+0.13$) [$-0.12$ to $+0.14$] \\
overshoot: source shared $-\,k^\ast$ & $+0.04$ ($-0.08$ to $+0.15$) [$-0.16$ to $+0.25$] & $+0.07$ ($-0.01$ to $+0.15$) [$-0.09$ to $+0.17$] \\
source version $-\,k^\ast$ version & $+0.29$ ($-0.52$ to $+0.97$) [$-0.52$ to $+1.31$] & $-0.02$ ($-0.71$ to $+0.60$) [$-1.04$ to $+0.61$] \\
source size $-\,k^\ast$ size & $+0.15$ ($-0.07$ to $+0.32$) [$-0.07$ to $+0.37$] & $+0.05$ ($-0.12$ to $+0.17$) [$-0.33$ to $+0.16$] \\
source lab $-\,k^\ast$ lab & $+0.01$ ($-0.13$ to $+0.14$) [$-0.22$ to $+0.27$] & $+0.09$ ($-0.01$ to $+0.16$) [$-0.16$ to $+0.21$] \\
flattening, version $-$ lab (source $-$ target) & $+0.28$ ($-0.53$ to $+0.98$) [$-0.42$ to $+1.28$] & $-0.11$ ($-0.81$ to $+0.48$) [$-1.00$ to $+0.57$] \\
flattening, size $-$ lab (source $-$ target) & $+0.14$ ($-0.10$ to $+0.37$) [$-0.03$ to $+0.41$] & $-0.04$ ($-0.23$ to $+0.13$) [$-0.48$ to $+0.28$] \\
\bottomrule
\end{tabular}
\end{table}

\paragraph{Results.} Noise explains little: $\lambda$ is 0.99 and 1.00 (at least
0.96 with a quarter of the unrelated votes), and $k^\ast$ does not fall with
thinning beyond seed-to-seed variation (1.07 to 1.14), so dilution can account
for at most about one percentage point of the multipliers. On the observed
related pairs, sibling votes are steeper than unrelated-vote scores imply in both
targets ($k^\ast$ 1.11 and 1.10, pair intervals above 1), matching plain scores'
calibration slopes in \cref{tab:transfer} (1.11 and 1.08). Resampling labs, as the
transfer test does, the interval stays above 1 for the new 2024 releases (1.02 to
1.24) but not in the 140k release (0.89 to 1.28): at the level of labs,
persistence is shown in one target only.

The paired bootstrap (\cref{tab:whydiff}) establishes none of the differences
behind the explanation, so the attribution below rests on point estimates. Joint
estimation: the target's own in-sample multiplier exceeds $k^\ast/\lambda$ by
0.03 and 0.02, with intervals above zero under one scheme in each target; on the
140k release it overcorrects as much as the frozen one (slope 0.96), while the
source and target shared multipliers agree (1.15 and 1.16). A change between
periods: in the 2024 log the shared multiplier is 1.17 before the cut and 1.12
after (difference 0.05, intervals including zero). The frozen shared
multiplier's total overshoot (0.04 and 0.07) has intervals that include zero. In
the 140k release $k^\ast$ is flat across relations while the frozen lineage
multipliers fall from 1.41 to 1.12, which is why the lineage fix gains less there
than one shared multiplier ($+0.14$ against $+0.36$); with 13 version pairs, the
flattening's intervals are wide.

The attainable gain is small. Sibling gaps average about half a logit, a 10\%
steeper slope moves their predicted win rates by about one percentage point (1.0
and 1.1), and shared or relation-specific multipliers fitted to the target's own
sibling votes, applied to the fixed unrelated-vote scores, gain at most $0.40$
and $0.61$ millinats per vote. This bounds only that class of corrections, not
better preference prediction in general. Against lab intervals about 2 to 5
millinats wide in \cref{tab:transfer} (4.6 for the primary outcome), the transfer
evaluation lacks the precision to detect gains of this size; this compares
magnitudes and is not a formal power calculation.

\section{Judge Experiment Details}
\label{app:judges}

The judge saw a system message (``You are judging answers to a user's request.
Pick the single best answer: the most helpful, correct and well written. Reply
with one letter only.''), the user's request and the answers labelled A, B (and
C), and the question ``Which answer is best? Reply with exactly one letter: A or B.''
(``A, B or C'' for three answers). Answers were generated at temperature 0.7 with
a limit of 800 tokens; Qwen3 models
were asked not to think. Claude Haiku 5.5, Gemini 2.5 Flash-Lite and DeepSeek V3.2
were run with reasoning off. Of 138,400 judgements, 32 could not be read as a
letter and are dropped. The matched placebo $U$ is the model from a lab other than
those of $S_1$ and $T$ whose pairwise score, averaged over the first two judges,
is closest to that of $S_2$. \Cref{tab:judges} gives the per-judge results
(\texttt{results/judge/choice.txt}). The choice model has seven scores, a
first-slot term for pairs and first- and second-slot terms for triads, a lapse
rate, a scale for triads and five nest parameters (version, size and same-lab
pairs; version and size triads); intervals resample prompts (200 draws).
\texttt{results/judge/sensitivity.txt} repeats the matched test for the two
judges that did not choose the placebo ($+0.10$, $+0.02$ to $+0.17$; 24 of 32
designs above zero), for the two that did ($+0.10$, $+0.04$ to $+0.17$), leaving
out each judge ($+0.09$ to $+0.11$) and each sibling pair ($+0.08$ to $+0.12$),
and for designs whose placebo matching changed ($+0.11$) or left unchanged
($+0.09$); every interval excludes zero. Judgements of one prompt are correlated,
so we rely on these intervals rather than likelihood-ratio tests. A null that
counts all of GPT-4.1 nano's first-position preference as lapse
($\varepsilon=0.48$) can absorb its random-placebo shift, but fitted jointly with
position terms the lapse is small, and the matched placebo needs no such model.
\texttt{results/judge/conditioned.txt} gives the within-prompt test (matched
placebo, per judge: DeepSeek V3.2 $+0.12$, $0.02$ to $0.24$; Gemini 2.5 Flash-Lite
$-0.04$, $-0.15$ to $+0.08$; GPT-4.1 nano $+0.12$, $0.05$ to $0.20$; Claude Haiku
5.5 $-0.10$, $-0.25$ to $+0.06$; random placebo, mean $+0.01$, $-0.04$ to $+0.07$),
the prompt-level win-share correlations per pair, and the pairs-to-triads fit:
triad log-likelihood gain over $\lambda=1$ with pair-fitted nest parameters
$-15$, $-10$, $+14$ and $-22$ (DeepSeek, Gemini, nano, Haiku), against $+22$,
$+14$, $+18$ and $+6$ with two triad parameters fitted.

\begin{table*}[h]
\caption{Judge experiment per judge: similarity effect $\delta$ against the random
and the matched placebo, and the choice model's nest parameters $\lambda$ for
pairs and for three-answer menus, with 95\% intervals from resampling prompts.}
\label{tab:judges}
\centering\small
\setlength{\tabcolsep}{4pt}
\begin{tabular}{@{}lcccc@{}}
\toprule
 & GPT-4.1 nano & Claude Haiku 5.5 & Gemini 2.5 FL & DeepSeek V3.2 \\
\midrule
$\delta$, random $U$ & $+0.13$ ($+0.09$, $+0.19$) & $+0.08$ ($-0.04$, $+0.18$) & $+0.11$ ($+0.02$, $+0.19$) & $+0.07$ ($-0.02$, $+0.14$) \\
$\delta$, matched $U$ & $+0.14$ ($+0.08$, $+0.20$) & $+0.07$ ($-0.04$, $+0.18$) & $+0.08$ ($-0.03$, $+0.16$) & $+0.13$ ($+0.06$, $+0.21$) \\
designs with $\delta>0$ & 14 of 16 & 13 of 16 & 11 of 16 & 13 of 16 \\
$\lambda$, version pairs & 0.75 (0.64, 0.87) & 0.67 (0.56, 0.77) & 0.85 (0.75, 0.98) & 0.87 (0.75, 1.02) \\
$\lambda$, size pairs & 0.86 (0.77, 0.97) & 1.03 (0.84, 1.24) & 0.66 (0.56, 0.76) & 1.10 (0.91, 1.31) \\
$\lambda$, same-lab pairs & 0.84 (0.73, 0.97) & 0.99 (0.81, 1.21) & 0.81 (0.67, 0.99) & 0.97 (0.82, 1.17) \\
$\lambda$, version triads & 0.82 (0.73, 0.88) & 0.92 (0.82, 1.03) & 0.94 (0.86, 1.01) & 0.85 (0.77, 0.94) \\
$\lambda$, size triads & 0.88 (0.81, 0.96) & 1.21 (1.03, 1.40) & 0.83 (0.76, 0.93) & 0.81 (0.74, 0.89) \\
lapse $\varepsilon$ & 0.04 & 0.07 & 0.00 & 0.02 \\
\bottomrule
\end{tabular}
\end{table*}

The answers, prompts and every judgement are released with the code.

\section{Human Ratings in PRISM}
\label{app:prism}

In the opening turn of a PRISM conversation \citep{kirk2024prism} a participant
writes a prompt, sees up to four answers from models drawn at random from 21
(sampled at temperature 1, about 50 words each) and rates each on a 1--100
slider; the top-rated answer is the chosen one. We use the 7,990 opening turns
with at least two answers. Relations follow our lineage table, with the six PRISM
models it lacks added; Claude Instant 1 and Command Nightly (a moving alias of
Command) are ambiguous. \Cref{tab:prism} summarises
\texttt{results/prism/prism.txt}; intervals resample participants (200 draws;
100 for the nested logit).

\emph{Score correlation.} Each rating is standardised by its model's mean and
standard deviation; the mean product of two answers' residuals in a menu, by
relation, is the within-menu correlation. Shared prompt and rater effects give
unrelated pairs 0.14; the excess for relatives is the quantity of interest. It
survives dropping the 25 menus in which Claude 2.0 and 2.1 gave identical text
(version excess $+0.30$), removing each participant's mean rating ($+0.32$,
$+0.10$, $+0.16$) and dropping pairs with an ambiguous tag ($+0.64$ from Claude
2.0/2.1 alone, $+0.14$, $+0.25$). Unrelated pairs vary (standard deviation 0.11
across pairs; GPT-3.5 correlates with several unrelated models at about 0.4), so
some of the excess may be shared style rather than shared lineage.

\emph{Choices.} The decoy test is the Mantel--Haenszel log odds ratio of an
outsider $T$ over $S_1$ in menus that also hold $S_1$'s sibling against menus
with no relative of $S_1$, over 508 $(S_1,S_2,T)$ strata. Excluding the 482 menus
with a tied top rating (ties go to the first slot) gives $+0.22$ ($-0.02$ to
$+0.44$) for versions and $+0.50$ ($0.15$ to $0.78$) for Claude 2.0/2.1, which
gives $+0.40$ ($0.08$ to $0.69$) over all menus. If
ratings were multivariate normal with the observed means, spreads and
correlations, the predicted shift would be $+0.20$ for versions and $+0.09$ for
same-lab pairs, close to the observed $+0.22$ and $+0.05$, but $+0.04$ for sizes
against an observed $-0.05$. A nested logit with slot terms and a lapse (fitted
lapse 0.12) puts $\lambda$ below 1 for versions only. Pairwise multipliers,
treating the chosen answer as beating each other one, point the same way as
Arena's but are too uncertain to count as a replication.

\begin{table}[h]
\caption{PRISM human ratings by relation, with 95\% intervals from resampling
participants. Implied $k$ applies \cref{prop:thurstone} with $\rho_0=0.14$.}
\label{tab:prism}
\centering\small
\setlength{\tabcolsep}{3pt}
\begin{tabular}{@{}lccc@{}}
\toprule
 & Version & Size & Same lab \\
\midrule
model pairs & 2 & 6 & 6 \\
score correlation & 0.50 & 0.27 & 0.33 \\
excess over unrelated & $+0.36$ & $+0.13$ & $+0.19$ \\
 & (0.24, 0.50) & (0.06, 0.20) & (0.13, 0.26) \\
implied $k$ & 1.31 & 1.08 & 1.13 \\
decoy log odds ratio & $+0.17$ & $-0.03$ & $+0.04$ \\
 & ($-0.06$, 0.39) & ($-0.19$, 0.11) & ($-0.09$, 0.16) \\
nested-logit $\lambda$ & 0.59 & 1.06 & 0.90 \\
 & (0.33, 0.87) & (0.83, 1.32) & (0.68, 1.16) \\
pairwise $k$ & 1.30 & 1.18 & 1.15 \\
 & (0.77, 2.10) & (0.67, 1.85) & (0.96, 1.48) \\
\bottomrule
\end{tabular}
\end{table}

\section{Randomised Pairings in MultiPref}
\label{app:multipref}

MultiPref \citep{miranda2024hybrid} pairs answers from six models (GPT-4 Turbo,
GPT-3.5 Turbo, Llama 2 70B Chat, Llama 3 70B Instruct, Tulu 2 7B and 70B) in all
21 combinations, self-pairs included, about 500 times each by design; each of the
10,461 comparisons is rated by two crowdworkers and two experts on a five-point
scale. By our labels Tulu 2 7B/70B is a size pair and Llama 2/3 70B and GPT-3.5/4
Turbo are same-lab pairs; Tulu 2 is fine-tuned from Llama 2, so its two pairs with
Llama 2 get their own multiplier. Fitting \cref{eq:k} with a first-slot term to the
30,168 annotations of distinct-model pairs, ties dropped, gives $k$ of 1.47 (0.83
to 2.37) for size, 1.11 (0.93 to 1.31) for same lab and 1.18 (1.01 to 1.41) for the
shared base, with intervals from resampling prompts
(\texttt{results/multipref/multipref.txt}). Counting ties as half a win gives 1.32,
1.11 and 1.10. Crowdworkers alone give 1.81, 1.36 and 1.07, experts alone 1.14,
0.88 and 1.35. With one or two model pairs per relation the check is weak, but its
pairings are randomised, so it is free of the selection that Arena's pairings may
carry.

\end{document}